\numberwithin{equation}{section} 
\newtheorem{theorem}{\sc Theorem}
\newtheorem{lemma}{\sc Lemma}
\newtheorem{coro}{\sc Corollary}
\newtheorem{req}{\sc Requirement}
\newtheorem{nota}{\sc Notation}
\newtheorem{defin}{\sc Definition}
\newtheorem{rem}{\sc Remark}
\newtheorem{cla}{\sc Claim}
\newtheorem{ex}{\sc Example}
\newenvironment{proof}{\par \sc Proof.\rm}{\hspace*{\fill}$\bullet$\vspace{1ex}}
\newenvironment{example}{\begin{ex}}{\hspace*{\fill}$\diamondsuit$\end{ex}}
\newenvironment{claim}{\begin{cla}}{\end{cla}}
\newenvironment{corollary}{\begin{coro}}{\end{coro}}
\newenvironment{definition}{\begin{defin}}{\end{defin}}
\newenvironment{remark}{\begin{rem}}{\hspace*{\fill}$\Diamond$\end{rem}}
\renewcommand{\emptyset}{\varnothing}
\begin{document}

\title{Identification of Probabilities}
\author{Paul M.B. Vit\'anyi and Nick Chater
\thanks{Vit\'anyi is with the 
National Research Institute for Mathematics
and Computer Science in the Netherlands (CWI) and
the University of Amsterdam.
Address: CWI, Science Park 123, 
1098 XG, Amsterdam, The Netherlands.
Email: {\tt paulv@cwi.nl}. 

Chater is with the 
Behavioural Science Group.
Address: Warwick Business School, University of Warwick, Coventry, CV4 7AL, UK.
Email: {\tt Nick.Chater@wbs.ac.uk}. Chater was supported by ERC grant 295917-RATIONALITY, the ESRC Network for Integrated Behavioural Science [grant number ES/K002201/1], the Leverhulme Trust [grant number RP2012-V-022], and Research Councils UK Grant EP/K039830/1.
}}

\maketitle
\begin{abstract}
Within psychology, neuroscience and artificial intelligence, there has been increasing interest in the proposal that the brain builds probabilistic models of sensory and linguistic input: that is, to infer a probabilistic model from a sample. 
The practical problems of such inference are substantial: the brain has limited data and restricted computational resources. But there is a more fundamental question: is the problem of inferring a probabilistic model from a sample possible even in principle? We explore this question and find some surprisingly positive and general results. First, for a broad class of probability distributions characterised by computability restrictions, we specify a learning algorithm that will almost surely identify a probability distribution in the limit given a finite i.i.d. sample of sufficient 
but unknown length. 
This is similarly shown to hold for sequences generated by a broad class 
of Markov chains, subject to computability assumptions.
The technical tool is the strong law of large numbers. 
Second, for a large class of dependent sequences, we specify an algorithm which identifies in the limit
a computable measure for which the sequence is typical, in the sense of Martin-L\"of (there 
may be more than one such measure). 
The technical tool is the theory of Kolmogorov complexity.
We analyse the associated predictions in both cases. We also briefly consider special cases, including language learning, and wider theoretical implications for psychology. 

Keywords: learning, Bayesian brain, identification, 
computable probability, Markov chain, computable measure,
typicality, strong law of large numbers, Martin-L\"of randomness, 
Kolmogorov complexity
\end{abstract}
\date{}
\section{Introduction}\label{sect.0}
Bayesian models in psychology and neuroscience postulate that the brain learns a generative probabilistic model of a set of perceptual or linguistic data (\cite{CTY06}, \cite{OC07}, \cite{PBML13}, \cite{TKGG11}). Learning is therefore often viewed as an 'inverse'-problem. Some aspect of the world is presumed to contain a probabilistic model, from which data is sampled; the brain receives a sample of such data, e.g., at its sensory surfaces, and has the task of inferring the probabilistic model. For example, the brain has to infer an underlying probability distribution, from a sample from that distribution. 

This theoretical viewpoint is implicit in a wide range of Bayesian models in cognitive science, which capture experimental data across many domains, from perception, to categorization, language, motor control, and reasoning (e.g., \cite{CO08}). It is, moreover, embodied in a wide range of computational models of unsupervised learning in machine learning, computational linguistics, computer vision (e.g., \cite{Ack85}, \cite{MK03}, \cite{YK06}). Finally, the view that the brain recovers probabilistic models from sensory data is both theoretically prevalent and has received considerable empirical support in neuroscience (\cite{KP04}). 

The idea that the brain may be able to recover a probabilistic process from a sample of data from that process is an attractive one. For example, a recovered probabilistic model might potentially be used to explain past input or to predict new input. Moreover, sampling new data from the recovered probabilistic model could be used in the generation of new data from that probabilistic process, from creating mental images \cite{Sh84} or producing language \cite{CV07}. Thus, from a Bayesian standpoint, one should expect that the ability to perceive should go alongside the ability to create 'mental images;' and the ability to understand language should go alongside the ability to produce language. Thus, the Bayesian approach is part of the broader psychological tradition of analysis-by-synthesis, for which there is considerable behavioural and neuroscientific evidence with a large amount of evidence, in perceptual and linguistic domains (\cite{PG13}, \cite{YK06}).

Yet, despite its many attractions, the proposal that the brain recovers probabilistic processes from samples of data faces both practical and theoretical challenges. The practical challenges include the fact that the available data may be limited (e.g., children learn the probabilistic model of highly complex language using only millions of words). Moreover, the brain faces severe computational constraints: even the limited amount of data encountered will be encoded imperfectly and may rapidly be lost (\cite{CC15}, \cite{Ha83}). The brain has limited processing resources to search and test the vast space of possible probabilistic models that might generate the data available.

In this paper we explore the conditions under which exactly inferring a probabilistic process from a stream of data is possible even in principle, with no restrictions on computational resources like time or storage or availability of data. If it turns out that there is no algorithm that can learn a probabilistic structure from sensory or linguistic experience when no computational or data restrictions are imposed, then this negative result will still hold when more realistic settings are examined. 

Our analysis differs from previous approaches to these issues by assuming that the probabilistic process to be inferred is, in a way that will be made precise later, computable. Roughly speaking, the assumption is that the data to be analysed is generated by a process that can be modelled by a computer (e.g., a Turing machine or a conventional digital computer) combined with a source of randomness (for example, a fair coin that can generate a limitless stream of random 0s and 1s that could be fed into the computer). There are three reasons to suppose that this focus on computable processes is interesting and not overly restrictive. First, some influential theorists have argued that all physical processes are computable in this, or stricter, senses (e.g., \cite{De85}). Second, most cognitive scientists assume that the brain is restricted to computable processes, and hence can only {\it represent} computable processes (e.g., \cite{Re15}). According to this assumption, if it turns out that some aspects of the physical world are uncomputable, these will trivially be unlearnable simply because they cannot be represented; and, conversely, all aspects of learning of relevance to psychology, i.e., all aspects of the world that the brain can successfully learn, will be within the scope of our analysis. Third, all existing models of learning in psychology, statistics and machine learning are computable (and, indeed, are actually implemented on digital computers) and fall within the scope of the present results.

\subsection{Background: Pessimism about learnability}

Within philosophy of science, cognitive science, and formal learning theory, a variety of considerations appear to suggest that negative results are likely. For example, in the philosophy of science it is often observed that theory is underdetermined by data (\cite{Du14}, \cite{Qu51}): that is, in infinite number of theories is compatible with any finite amount of data, however large. After all, these theories can all agree on any finite data set, but diverge concerning any of the infinitely large set of possible data that has yet to be encountered. This might appear to rule out identifying the correct theory---and hence, {\it a fortiori} identify a correct probability distribution. 

Cognitive science inherits such considerations, to the extent that the learning problems faced by the brain are analogous to those of inferring scientific theories (e.g., \cite{GMK99}). But cognitive scientists have also amplified these concerns, particularly in the context of language acquisition. Consider, for example, the problem of acquiring language from positive evidence alone, i.e., from hearing sentences of the language, but with no feedback concerning whether the learner's own utterances are grammatical or not (so-called negative evidence). It is often assumed that this is, to a good approximation, the situation by the child. This is because some and perhaps all children receive little useful feedback on their own utterances and ignore such feedback even when it is given (\cite{Bo88}). Yet, even without negative evidence, children nonetheless learn their native language successfully. 
For example, an important textbook on language acquisition \cite{CL99}
 repeatedly emphasises that the child cannot learn {\it restrictions} on grammatical rules from experience---and that these must therefore somehow arise from innate constraints. For example, the English sentences {\it which team do you want to beat}, {\it which team do you wanna beat}, and {\it which team do you want to win}, which would seem naturally to imply that {\it *which team do you wanna win} is also a grammatical sentence. As indicated by the asterisk, however, this sentence is typically rejected as ungrammatical by native speakers. According to classical linguistic theory (e.g., \cite{Ch82}), the contraction to {\it wanna} is not possible because it is blocked by a ``gap'' indicating a missing subject---a constraint that has sometimes been presumed to follow from an innate universal grammar \cite{Ch80}. 

The problem with learning purely from positive evidence is that an overgeneral hypothesis, which does not include such restrictions, will be consistent with new data; given that languages are shot through with exceptions and restrictions of all kinds, this appears to provide a powerful motivation for a linguistic nativism \cite{Ch80}. But this line of argument cannot be quite right, because many exceptions are entirely capricious and could not possibly follow from innate linguistic principles. For example, the grammatical acceptability of {\it I like singing}, {\it I like to sing}, and {\it I enjoy singing} would seem to imply, wrongly, the acceptability {\it *I enjoy to sing}. But the difference between the distributional behaviour of the verbs {\it like} and {\it enjoy} cannot stem from any innate grammatical principles. The fact that children are able to learn restrictions of this type, and the fact that they are so ubiquitous throughout language, has even led some scholars to speak of the {\it logical} problem of language acquisition (\cite{BM81}, \cite{HL81}).

Similarly, in learning the meaning of words, it is not clear how, without negative evidence, the child can successfully retreat for overgeneralization. If the child initially proposes that, for example, {\it dog} refers to any animal, or that {\it mummy} refers to any adult female, then further examples will not falsify this conjecture. In word learning and categorization, and in language acquisition, researchers have suggested that one potential justification for over-turning an overgeneral hypothesis is that absence-of-evidence can sometimes be evidence-of-absence (\cite{Hsu16, HO08}). That is, a child might take the absence of people using the word {\it dog} when referring to cats or mice; and the absence of {\it Mummy} being used to refer to other female friends or family members might lead to the child to be in doubt concerning their liberal use of these terms. But, of course, this line of reasoning is not straightforward---for example, when learning {\it any} category that may apply in an infinite number of situations, the overwhelming majority of these will not have been encountered. It is not immediately clear how the child can tell the difference between benign, and genuinely suspicious, absence of evidence. The present results show that there is an algorithm that, under fairly broad conditions, can deal successfully with overgeneralization with probability 1, given sufficient data and computation time. 

Previous results in the formal analysis of language learnability have reached more pessimistic conclusions, using different assumptions (\cite{Go67}, \cite{JORS99}). For example, as quoted in \cite{Pi79}, the pioneer of formal learning theory E. M. Gold points that ``the problem with [learning only from] text is that if you guess too large a language, the sample will never tell you you're wrong'' (\cite{Go67}, p. 461). This is true if we allow very few assumptions about the structure of the text---and indeed negative results in this area frequently depend on demonstrating the existence of texts (i.e., samples of the language) with rather unnatural behavior precisely designed to mislead any putative learner. We shall see below that realistic, though still quite mild, assumptions, are sufficient to yield the opposite conclusion: that probability distributions, including probability distributions over languages, can be identified from positive instances alone. 


\subsection{Preview and examples}

Consider, first, the case of independent, identical draws from a probability distribution. In many areas of psychology, the learning task is viewed as abstracting some pattern from a series of independent trials rather than picking up sequential regularities (although the i.i.d. assumption is not necessarily explicit). The i.i.d. case is relevant to problems as diverse as classical conditioning (\cite{RW}, where a joint distribution between conditioned and unconditioned stimuli must be acquired) category learning (\cite{SHJ}, where a joint distribution of category instances and labels is the target), artificial grammar learning or artificial language learning (\cite{Reber}, \cite{Saffran}, where a probability distribution over strings of letters or sounds is to be learned). Similarly, the i.i.d. assumption is often implicit in learning algorithms in cognitive science and machine learning, such as, for example, many Bayesian and neural network models in perception, learning and categorization (e.g., \cite{Ack85})

Learning such potentially complex patterns from examples may seem challenging. Yet even analysing perhaps the simplest case, learning the probability distribution of a biased coin is not straightforward. 
For concreteness, consider flipping a coin, with probability $p$ of coming up heads. Suppose that we can flip the coin endlessly, and can, at every point as the sequence of data emerges, guess the value of $p$; we can change our mind as often as we like. It is natural to wonder whether there is some procedure for guessing such that, after some point, we stick to our guess---and that this guess is, either certainly or with high probability, correct. So, for example, if the coin is a fair coin, such that $p=0.5$, will we eventually lock on to the conjecture that the coin is fair and, after some point, never change this conjecture however much data we receive?

The answer is by no means obvious, even for such simple case. After all, the difference between the number of heads and tails will fluctuate, and can grow arbitrarily large---and such fluctuations might persuade us, wrongly, that the coin is biased in favour, or against, heads. How sure can we be that, eventually, we will successfully identify the precise bias of a coin that {\it is} biased, e.g., where $p=3/4$ or $p=1/3$?

Or, to step up the level of complexity very considerable, consider the problem of inferring a stochastic phrase structure grammar from an indefinitely large sample of i.i.d. sentences generated from that grammar. \footnote{A stochastic phrase structure grammar is a conventional phrase structure grammar, with probabilities associated with each of the rewrite rules. For example, a noun phrase might sometimes 'expand' to give a determiner followed by a noun, while sometimes expanding to give a single 'proper' noun; and individual grammatical categories, such as proper nouns, map probabilistically on specific proper nouns.} Or suppose the input is a sequence of images generated draw from a probabilistic image model such as a Markov random field---can a perceiver learn to precisely identify the probabilistic model of the image, given sufficient data? 

As we shall see in Section III, below, remarkably, it turns out that, with fairly mild restrictions (a restricted computability), with probability 1, it is possible to infer in the limit, the correct probability distribution exactly, given a  sufficiently large finite supply of i.i.d. samples. Moreover, it is possible to specify a computable algorithm that will reliably find this probability distribution. A similar result holds for ergodic Markov chains, which broadens its application considerably.


This result is unexpectedly strong, given mild restrictions on computability (which we describe in detail below). In particular, it shows that there is no {\it logical} problem concerning the possibility of learning languages, or other patterns, which contain exceptions, from positive evidence alone. As noted above, it has been influentially argued in linguistics and the study of language acquisition that exceptions (examples that are {\it not} possible) cannot be learned purely by observing their non-occurrence, because there are, after all, infinitely many linguistic forms which are possible but also have not been observed (e.g., \cite{CL99}). A variety of arguments and results have suggested that, despite such arguments, languages with exceptions can be learned successfully (\cite{CCGP15}, \cite{CL10}, \cite{PS02}). 


The present result shows that with the mentioned restrictions, given i.i.d. data it is possible exactly to learn the probability distribution
of languages from a sample; or, from Markovian outputs, it is possible exactly to learn the Markov chain involved, and hence, of course, which sentences are acceptable in the language. An earlier result in \cite{CV07} showed that language acquisition with sufficient data was possible on the assumption that an 'ideal' learner could find the shortest description of a corpus. But finding the shortest description is known to be incomputable. By contrast, the present paper focuses on what can be learned by a computable learner, provides an explicit algorithm by which that learner can operate, and considers exact learning rather than approximating the language arbitrarily accurately. 

We also consider what can be learned if we weaken the i.i.d. restriction (and the mentioned Markov chain restriction) considerably---to deal with the possibility of learning sequential data that is generated by a computable process (we make this precise below). Many aspects of the environment, from the flow of visual and auditory input, to the many layers of sequential structure relating successive sentences, paragraphs, and chapters, while reading a novel, are not well approximated by identical independent sampling from a fixed distribution or the output of a small Markov chain. Nonetheless, the brain appears to be able to discover their structure, at least to some extent, with remarkable effectiveness. 


One particularly striking illustration of the power to predict subsequent input is Shannon''s method for estimating the entropy of English \cite{Sh51}. Successively predicting the next letter in a text, given previous letters, one or two guesses often suffice, leading to the conclusion that English texts typically can be encoded using little more than one bit of information per letter (while more than four bits would be required if the 26 letters were treated as occurring independently). The ability to predict incoming sequential input is, of course, important for reacting to the physical or linguistic environment successfully, by predicting dangers and opportunities and acting accordingly. Many theorists also see finding structure in sequential material as fundamental to cognition and learning (\cite{Cl15}, \cite{El90}, \cite{HS98}, \cite{KFF07}).

If we weaken the i.i.d. or above Markovian assumption, what alternative restriction on sequential structure can we impose, and still obtain tractable analytical results? Clearly if there are no restrictions on structure of the process at all, then there are no constraints between prior and subsequent material. It turns out, though, a surprisingly minimal restriction is sufficient: we assume, roughly, only that the sequential material is generated by a mildly restricted {\em computable} dependent probabilistic process (this will be made precise below). Unlike the i.i.d. or Markov case, different such processes could have generated this sample; but it turns out that, given a finite sample that is long enough and that is guaranteed to be the initial segment of an infinite typical output of one of those computable dependent probabilistic processes, it is possible to infer a single process exactly (out of a number of such processes) according to which that sample is an initial segment of an infinite typical sample. We shall discuss these issues in Section IV.

Throughout this paper, we focus on learning probabilities themselves, rather than particular representations of probabilities. If there is at least one computer program representing a function, there are, of course, infinitely many such programs (representing the data in slightly different ways, incorporating additional null operations, and so on). The same is true for programs representing probability distributions. For some purposes, these differences in representation may be crucial. For example, psychologists and linguists may be interested in which of an infinite number of equivalent grammars---from the point of view of the sentences allowed---is represented by the brain. But, from the point of view of the problem of learning, we must treat them as equivalent. Indeed, it is clear that no learning method of probabilities alone could ever distinguish between models which generate precisely the same probability distribution over possible observations. 

Our discussion begins with an introduction of our formal framework, in the next section. We then turn to the case of i.i.d. draws from a computable mass function, and to runs of a computable ergodic Markov chain, using the strong law of large numbers as the main technical tool. The next section {\it Computable Measures} considers learning material with computable sequential dependencies; here the main technical tool is Kolmogorov complexity theory. We then briefly consider whether these results have implications for the problem of prediction future data, based on past data, before we draw brief conclusions. The mathematical details and detailed proofs are relegated to Appendices. 

\section{The formal framework}
We follow in the general theoretical tradition of formal learning theory, where we abstract away from specific representational questions, and focus on the underlying abstract structure of the learning problem. 

One can associate the natural numbers 
with a lexicographic length-increasing ordering of finite strings
over a finite alphabet. A natural number corresponds to the string
of which it is the position in the thus established order. Since
a language is a set of sentences (finite strings over a finite alphabet),
it can be viewed as a subset of the natural numbers. (In the same way, natural numbers could be associated with images or instances of a concept). 
The learnability of a language under various
computational assumptions is the subject of an immensely influential
approach in \cite{Go65} and especially \cite{Go67}, or the review 
\cite{JORS99}. But surely
in the real world the chance of one sentence of a language 
being used is different
from another one. For example, in general short sentences have a larger
chance of turning up than very long sentences. Thus, the elements
of a given language are distributed in a certain way. There arises
the problem of identifying or approximating this distribution.    

Our model is formulated as follows: 
we are given an sufficiently long finite sequence of data consisting of elements
drawn from the set (language) according to a certain
probability, and the learner has to identify this probability. 
In general, however much data been encountered, 
there is no point at which the learner can
announce a particular probability as correct with certainty.
Weakening the learning model, the learner might learn to
identify the correct probability in the limit. That is, perhaps the learner
might make a sequence of guesses, finally locking 
on to correct probability and sticking to it
forever---even though the learner can never know for sure that it has
identified the correct probability successfully. 
We shall consider
identification in the limit (following, for example, 
\cite{Go67,JORS99,Pi79}). 
Since this is not enough we additionally restrict
the type of probability.

In conventional statistics, probabilistic models are typically idealized
as having continuous valued parameters; and hence there is an uncountable
number of possible probabilities.
In general it is impossible that a learner 
can make a sequence of guesses
that precisely locks on to the correct values of continuous parameters.
In the realm of algorithmic information theory, in particular in
Solomonoff induction \cite{So64} and here, we reason as follows.
The possible strategies of learners are computable in the sense
of Turing \cite{Tu36}, that is, they are computable functions.
The set of these is discrete and thus countable.
The hypotheses that can be learned are therefore countable, 
and in particular
the set of probabilities from which the learner chooses 
must be {\em computable}. Indeed, this argument can be interpreted as showing that the fundamental problem is one of representation: the overwhelming majority of real-valued parameters cannot be {\it represented} by any computable strategy; and hence {\it a fortiori} cannot possible be learned. 

Our starting point is that it is only of interest to consider the identifiability of computable hypotheses---because hypotheses that are not computable cannot be represented, let alone learned. Making this precise requires specifying what it means for a probability distribution to be computable. Moreover, it turns out
that computability is not enough, it is also necessary that the considered set of computable probabilities is computably enumerable (c.e.) and co-computable enumerable (co-c.e.) sets, all of which are explained in the Appendix~\ref{sect.comput}.  Informally, a subset of a set is c.e. 
if there is a computer which 
enumerates all the elements of the subset but no element
outside the subset (but in the set). 
For example, the computable probability
mass functions (or computable measures) for which we know algorithms
computing them can be computably enumerated in lexicographic
order of the algorithms. Hence they
satisfy Theorem~\ref{theo.1} (or Theorem~\ref{theo.3}.

A subset is co-c.e. if all elements outside the subset (but in the set) 
can be enumerated by a computer. In our case the set comprises 
all computable probability mass functions, respectively, 
all computable measures. 
Since by Lemma~\ref{lem.incomp} in Appendix~\ref{sect.comput} this set is not
c.e. a subset that is c.e. (or co-c.e.) is a proper subset, that is, it
does not contain all computable probability mass functions, respectively,
all computable measures.


In the exposition below, we consider two cases. 
In case 1 the data are
drawn independent identically distributed (i.i.d.) from a 
subset of the natural numbers according to a probability mass function
in a c.e. or co-c.e. set of computable probability mass functions, or
consist of a run of a member of
a c.e. or co-c.e. set of computable ergodic Markov chains. For this case, there is, as we have noted, a learning algorithm that will almost surely identify a probability distribution in the limit. This is the topic of Section III, below. 

In case 2 the elements of the infinite sequence 
are dependent 
and the data sequence is typical for a measure
from a c.e. or co-c.e. set of computable measures. For this more general  case, we prove a weaker, though still surprising result: that there is an algorithm which identifies in the limit a computable measure for which that sequence is typical (in the sense introduced by Martin-L\"of). These results are the focus of Section IV, below.

\subsection{Preliminaries}
Let ${\cal N}$, ${\cal Q}$, ${\cal R}$, and ${\cal R}^+$ 
denote the natural numbers, 
the rational numbers, the real numbers, and the nonnegative
real numbers, respectively.
We say that we {\em identify} a function
$f$ {\em in the limit} if we have an algorithm which
produces an infinite sequence $f_1, f_2, \ldots$ of functions and 
$f_i=f$ for all but finitely many $i$. This corresponds to
the notion of ``identification in the limit'' in 
\cite{Go67,JORS99,Pi79,ZZ08}.
In this notion at every step
an object is produced and after a 
finite number of steps the target object is
produced at every step. However, we do not know this finite number. It is as
if you ask directions and the answer is ``at the last intersection 
turn right,'' but you do not know which intersection is last.
The functions $f$ we want to identify in the limit are
probability mass functions, Markov chains, or measures.
\begin{definition}
Let $L \subseteq {\cal N}$ and its associated 
{\em probability mass function} $p$ a function
$p: L \rightarrow {\cal R}^+$ satisfying 
$\sum_{x \in L} p(x)=1$. A Markov chain is an extension
as in Definition~\ref{def.m}. A {\em measure} $\mu$
is a function $\mu : L^* \rightarrow {\cal R}^+$ satisfying
the measure equalities in Appendix~\ref{sect.measure}.
\end{definition}

\subsection{Related work}\label{sect.rel}
In \cite{An88} (citing previous more restricted work)
a target probability mass function was identified in the limit
when the data are drawn i.i.d. in the following setting.
Let the target probability mass function $p$ be an element
of a list  $q_1, q_2, \ldots $ subject to the following
conditions: 
(i) every $q_i : {\cal N} \rightarrow {\cal R}^+$
is a probability mass function; 
(ii) we exhibit a computable total function $C(i,x,\epsilon)=r$ such that
$q_i(x)-r \leq \epsilon$ with $r,\epsilon >0$ are rational numbers.
That is, there exists a rational number approximation for all probability
mass functions in the list up to arbitrary precision, and we give
a single algorithm which for each such
function exhibits such an approximation. 
The technical means used
are the law of the iterated logarithm and the Kolmogorov-Smirnov test. 
However, the list $q_1, q_2, \ldots $ can not contain 
all computable probability
mass functions because of a diagonal argument, Lemma~\ref{lem.incomp}.

In \cite{BC91} computability questions are apparently ignored.
The {\em Conclusion} 
states ``If the true density [and hence
a probability mass function] is finitely complex [it is computable]
then it is exactly discovered for all sufficiently large sample sizes.''.
The tool that is used is estimation according to 
$\min_q (L(q)+\log(1/\prod_{i=1}^n q(X_i))$. Here $q$ is a probability mass 
function, $L(q)$ is the length of its code and $q(X_i)$ is
the $q$-probability of the $i$th random variable $X_i$. To be able to
minimize over the set of computable $q$'s,
one has to know the $L(q)$'s. 
If the set of candidate distributions is countably infinite, then
we can never know when the minimum is reached---hence at best we have
then identification in the limit.
If $L(q)$ is identified with the Kolmogorov complexity $K(q)$, 
as in Section IV of this reference,
then it is incomputable as already
observed by Kolmogorov in \cite{Ko65} 
(for the plain Kolmogorov complexity; the case of
the prefix Kolmogorov complexity $K(q)$ is the same). 
Computable $L(q)$
(given $q$) cannot be computably enumerated; if they were this
would constitute a computable enumeration of computable $q$'s which 
is impossible by Lemma~\ref{lem.incomp}. To obtain the minimum we
require a computable enumeration of the $L(q)$'s in the estimation formula.
The results hold (contrary to what is claimed in the {\em Conclusion}
of \cite{BC91} and
other parts of the text) not for the set of computable 
probability mass functions since they are not c.e..
The sentence
``you know but you don't know you know'' on the second page of \cite{BC91}
does not hold for an arbitrary
computable mass probability.

In reaction to an earlier version of this paper with too
large claims as described in Appendix~\ref{sect.genesis}, in \cite{BMS14} it is shown that it is 
impossible to identify an arbitrary
computable probability mass function (or measure) 
in the limit given an infinite 
sequence of elements from its support (which sequence is guarantied
to be typical for some computable measure in the measure case). 

\subsection{Results}
The set of halting algorithms for computable probabilities (or measures) is 
not c.e., 
Lemma~\ref{lem.incomp} in Appendix~\ref{sect.comput}. 
This complicates the algorithms and
analysis of the results. 
In Section~\ref{sect.1} there is
a computable probability mass function (the target) 
on a set of natural numbers.
We are given a sufficiently long finite sequence of
elements of this set that are drawn i.i.d. and are 
asked to identify the target.
An algorithm is presented which
identifies the target in the limit almost surely
provided the target is an element of a c.e. or co-c.e. 
set of halting algorithms
for computable probability mass functions 
(Theorem~\ref{theo.1}). This also
underpins the result announced in \cite[Theorem 1 in the Appendix and 
appeals to it in the main text of the reference]{HCV11} with the following
modification ``computable probabilities''
need to be replaced by ``c.e. and co-c.e. sets of computable probabilities''.
If the target is an element of a c.e. or co-c.e.
set of computable ergodic Markov chains
then there is an algorithm with as input a sequence of states 
of a run of the Markov chain and as output almost surely the target
(Corollary~\ref{cor.2}). 
The technical tool is in both cases the strong law of large numbers.
In Section~\ref{sect.3} the set of natural numbers 
is also infinite and the elements 
of the sequence are allowed 
to be dependent. We are given a guaranty that the sequence is 
typical (Definition~\ref{def.typical})
for at least one measure from a c.e. or co-c.e. set 
of halting algorithms for computable measures.
There is an algorithm which identifies in the limit 
a computable measure for which the data sequence is typical
(Theorem~\ref{theo.3}).
The technical tool is the Martin-L\"of theory of sequential
tests \cite{Ma66} based on Kolmogorov complexity.
In Section~\ref{sect.4} we consider the associated predictions, and in
Section~\ref{sect.concl} we give conclusions.
In Appendix~\ref{sect.comput} we review the used computability notions,
in Appendix~\ref{sect.kolmcomp} we review notions of Kolmogorov complexity,
in Appendix~\ref{sect.measure} we review the used measure and computability
notions. We defer the proofs of the theorems to Appendix~\ref{sect.proofs}.
In Appendix~\ref{sect.genesis} we give the tortuous genesis of the results.

\section{Computable Probability Mass Functions and I.I.D. Drawing}\label{sect.1}
To approximate a probability in the i.i.d. setting is 
well-known and an easy 
example to illustrate our problem. 
One does this by an algorithm computing the probability
$p(a)$ in the limit for all $a \in L \subseteq {\cal N}$ 
almost surely given the infinite sequence $x_1 ,x_2, \ldots$
of data i.i.d. drawn from $L$ according to $p$. 
Namely, for $n=1,2, \ldots$ for every $a \in L$ occurring 
in $x_1 ,x_2, \ldots , x_n$
set $p_n(a)$ equal to the frequency of occurrences of
$a$ in $x_1,x_2, \ldots ,x_n$.
Note that the different values
of $p_n$ sum to precisely 1 for every $n=1,2, \ldots .$
The output is a sequence $p_1, p_2, \ldots$ of 
probability mass functions such that we have
$\lim_{n \rightarrow \infty} p_n=p$ almost surely, 
by the strong law of large numbers (see Claim~\ref{claim.slln}).
The probability mass functions considered here consist of {\em all}
probability mass functions on $L$---computable or not. 
The probability mass function $p$ is 
thus represented by an approximation algorithm.

In this paper we deal only with computable probability mass functions.
If $p$ is computable then
it can be represented by a halting algorithm which computes it as 
defined in Appendix~\ref{sect.comput}.
Most known probability mass functions are computable provided
their parameters are computable. 
In order that it is computable
we only require that the probability mass function is finitely
describable and there is a computable process producing it \cite{Tu36}.

One issue is how short the code for $p$ is, a second issue
are the computability properties of the code for $p$, a third
issue is how much of the data sequence is used in the learning
process. The approximation 
of $p$ above results in a sequence of codes 
of probabilities $p_1,p_2, \ldots$
which are lists of the sample frequencies in an initial finite
segment of the data sequence. The code length of the list of frequencies 
representing $p_n$ grows usually
to infinity as the length $n$ of the segment grows to infinity. 
The learning process involved
uses all of the data sequence and the result is an encoding of
the sample frequencies in the data sequence in the limit. 
The code for $p$ is usually infinite.
This holds as well if $p$ is computable. Such an approximation contrasts
with identification in the following. 


\begin{theorem}\label{theo.1}
{\sc I.I.D. Computable Probability Identification}
Let $L$ be a set of natural numbers and
$p$ be a probability mass function on $L$. This $p$ is described by
an element of a c.e. or co-c.e. set of halting algorithms
for computable probability mass functions. 
There is an algorithm identifying $p$ in the limit almost surely from
an infinite sequence $x_1,x_2, \ldots$ of elements of $L$ drawn i.i.d.
according to $p$. 
The code for $p$ via an appropriate Turing machine
is finite. The learning process uses only a finite initial segment
of the data sequence and takes finite time. 
\end{theorem}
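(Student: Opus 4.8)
The plan is to realise the algorithm as \emph{identification by enumeration with a statistical veto}: enumerate the hypothesis class and, at every stage, output the least-indexed candidate that has not yet been refuted, where refutation combines a test driven by the strong law of large numbers with the (co\nobreakdash-)c.e.\ bookkeeping. In the c.e.\ case I fix a computable enumeration $q_1,q_2,\dots$ of the halting algorithms in the given set, let $m(n)\to\infty$ be the number of candidates inspected by stage $n$, and let $j^\ast$ be the least index with $q_{j^\ast}=p$. It then suffices to design, for each candidate $q_i$, a \emph{decidable, data-driven} test $T_n(q_i)$ enjoying, on a single probability-$1$ event, two properties: (1) if $q_i=p$ then $q_i$ is rejected at only finitely many stages; and (2) if $q_i\neq p$ then $q_i$ is rejected at all sufficiently large stages. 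Granting (1) and (2), the rule ``output the least $i\le m(n)$ that $T_n$ does not reject'' converges almost surely to $j^\ast$: the finitely many indices below $j^\ast$ are all wrong and hence, by (2), permanently rejected from some stage on, while $q_{j^\ast}$ is, by (1), never rejected from some stage on. Thus the guesses stabilise to the single finite code $q_{j^\ast}$, a halting algorithm computing $p$ exactly.

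The heart of the matter is the construction of $T_n$, and this is where the main obstacle lies. A naive rule such as ``reject when the total-variation distance $\|p_n-q_i\|$ between the empirical distribution $p_n$ and $q_i$ exceeds a threshold'' cannot work with a \emph{uniform} threshold: a fixed positive threshold may fail to separate $p$ from candidates arbitrarily close to it (breaking (2)), whereas a deterministically shrinking threshold is defeated by the fact that the strong law gives \emph{no control on the rate} at which $p_n\to p$, so the true candidate would be rejected infinitely often (breaking (1)). I resolve this by testing \emph{elementwise on a growing window}. Let $A_n$ be the first $n$ elements of $L$, set $c_n=\sqrt{(3\ln n)/n}$, and let $T_n(q_i)$ reject iff $|p_n(a)-q_i(a)|>c_n$ for some $a\in A_n$ (all quantities computed to sufficient rational precision so the comparison is effective). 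For (2): if $q_i\neq p$ there is a witness $a^\ast$ with $\beta=|q_i(a^\ast)-p(a^\ast)|>0$, so once $a^\ast\in A_n$ and $c_n<\beta/2$, the strong law ($p_n(a^\ast)\to p(a^\ast)$) forces rejection at every large stage. For (1), when $q_i=p$, Hoeffding's inequality bounds the per-stage error probability by $|A_n|\cdot 2e^{-2nc_n^2}\le 2n^{-5}$, which is summable, so the Borel--Cantelli lemma yields only finitely many rejections almost surely. Intersecting these countably many probability-$1$ events gives the single good event on which (1) and (2) hold for every candidate.

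To cover the co-c.e.\ case and to keep everything effective I dovetail. Since the set of all halting algorithms for computable mass functions is not itself c.e.\ (Lemma~\ref{lem.incomp}), at stage $n$ I allot a bounded computation budget to the first $m(n)$ programs and use only those outputs that have halted; a candidate becomes permanently \emph{inactive} if it is either vetoed by $T_n$ or, in the co-c.e.\ case, enumerated into the complement of the hypothesis set. The output is the least active index, and the convergence argument is unchanged: every index below $j^\ast$ is eventually inactivated---by the statistical veto if it lies in the class but differs from $p$, and by the complement enumeration if it lies outside the class---while $j^\ast$ lies in the class, is never enumerated out, and is eventually never vetoed. I expect the genuinely delicate points to be (i) certifying that the elementwise comparisons are decidable even though the candidates are computable reals (handled by approximating $q_i(a)$ to precision $o(c_n)$ and absorbing the slack into the constants), and (ii) the bookkeeping that keeps each stage's computation finite while guaranteeing that the true candidate is fully evaluated in the limit.

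The auxiliary claims then follow immediately from this structure: because the guesses stabilise at a finite (though unknown) stage to the fixed program $q_{j^\ast}$, the identified code is finite; only the finite initial segment of the data seen up to the stabilisation stage is needed to reach the final answer; and the bounded per-stage dovetailing ensures each guess is produced in finite time---in sharp contrast to the frequency-list approximation discussed earlier in Section~\ref{sect.1}, whose code length grows without bound.
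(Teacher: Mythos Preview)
Your proposal is correct and follows the same high-level architecture as the paper's proof: identification by enumeration with a statistical veto, using a shrinking threshold of order $\sqrt{(\log n)/n}$ so that the true candidate survives cofinitely often while every wrong candidate is eventually rejected via a single witnessing element.

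The one noteworthy technical divergence is the concentration tool. The paper invokes the law of the iterated logarithm (citing Feller) to assert that, almost surely, $|p(a)-\#a(x_1,\dots,x_n)/n|<\sqrt{(\lg n)/n}$ for all but finitely many $n$; your argument instead uses Hoeffding's inequality plus Borel--Cantelli to reach the same conclusion with the threshold $c_n=\sqrt{(3\ln n)/n}$. Your route is arguably more elementary and self-contained, and it makes the summability explicit rather than buried in the LIL proof. Conversely, the paper's use of LIL gives the sharpest possible envelope and avoids the union bound over $A_n$. A second minor difference is the window: the paper takes $A_n$ to be the symbols actually observed so far, augmented by a candidate-dependent set $B_{i,n}$ capturing most of $q_i$'s mass, whereas you take the first $n$ elements of $L$; both choices work, but if you want to avoid assuming an effective enumeration of $L$ you should switch to the ``symbols seen so far'' definition (the Hoeffding bound is unaffected since $|A_n|\le n$ in either case, and a witness with $p(a^\ast)>0$ always exists when $q_i\neq p$ because both are probability mass functions). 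Finally, your treatment of the co-c.e.\ case---deactivating a candidate when it is enumerated into the complement---is more explicit than the paper's, which simply speaks of ``a list of a c.e.\ or co-c.e.\ set'' and relies on the discussion in Appendix~\ref{sect.comput} for how such a list is produced in the limit.
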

We do not know how large the finite items in the theorem are.
The proof of the theorem is 
deferred to Appendix~\ref{sect.proofs}. The intuition is as follows.
By assumption the target probability mass function 
is a member of a  linear list of halting algorithms for 
computable probability mass functions listed as list ${\cal A}$. 
By the strong law of large numbers we can approximate the target
probability mass function by the sample means. Since the members
of ${\cal A}$ are linearly ordered we can after each new sample
compute the least member which agrees best according to a certain
criterion with the samples
produced  thus far. At some stage this least element
does not change any more. 
\begin{example}
\rm
Since the c.e. and co-c.e. sets strictly contain 
the computable sets, Theorem~\ref{theo.1}
is strictly stronger than the result in \cite{An88} referred to in
Section~\ref{sect.rel}. It is also strictly stronger than 
\cite{BC91} that does not give identification in the limit
for classes of computable functions. 

Define the primitive computable probability mass functions 
as the set of probability mass functions
for which it is decidable that they are constructed from
primitive computable functions. Since this set is computable it is c.e..
The theorem shows that
identification in the limit is possible for members of this set.
Define the time-bounded probability
mass functions for any fixed computable time bound as
the set of elements for which it is decidable 
that they are computable probability
mass functions
satisfying this time bound. Since this set is computable it is c.e..
Again, the theorem shows that
identification in the limit is possible for elements from this set.

Another example is as follows.
Let $L=\{a_1,a_2, \ldots , a_n\}$ be a finite set. The primitive recursive
functions $f_1,f_2, \ldots$ are c.e.. Hence the probability mass functions
$p_1, p_2, \ldots$ on $L$ defined by $p_i(a_j)= f_i(j)/\sum_{h=1}^n f_i(h)$
are also c.e.. Let us call these probability mass functions
simple. By  Theorem~\ref{theo.1} they can be identified
in the limit.
\end{example}

The class of probability mass functions for which the present result applies 
is very broad. Suppose, for example, that we frame the problem of language 
acquisition in the following terms: a corpus is created by i.i.d. sampling 
from some primitive recursive language generation mechanism 
(for example, a stochastic phrase structure grammar \cite{Ch96} with 
rational probabilities, or an equivalent, but more cognitively motivated 
formalism such as tree-adjoining grammar \cite{JS97} or combinatory 
categorical grammar \cite{St00}). That is, the algorithm described here will search possible programs which correspond to generators of grammars, and will eventually find, and never change from, a stochastic grammar that precisely captures the probability mass function that generated the linguistic data. 
That is, the present result implies that there is a learning algorithm that 
identifies in the limit the probability mass function according to which 
these sentences are generated with probability 1. Of course, there may, in general, within any reasonably rich stochastic grammar formalism, be many ways of representing the probability distribution over possible sentences (just as there are many computer programs that code for the same function). Of course, no learning process can distinguish between these, precisely because they are, by assumption, precisely equivalent in their  predictions. Hence, an appropriate goal of learning can only be to find the underlying probability mass function, rather than attempting the impossible task of inferring the particular representation of that function. 

The result applies, of course, not just to language but to learning structure in perceptual input, such as visual images. Suppose that a set of visual images is created by i.i.d sampling 
from a Markov random field with rational parameters \cite{Li12}; 
then there will be a learning algorithm which identifies in the limit 
the probability distribution over these images with probability 1. The result applies, also, to the unsupervised learning of environmental structure from data, for example by connectionist learning methods \cite{Ack85} or by Bayesian learning methods (\cite {CTY06}, \cite{Pe88}, \cite{TKGG11}).

\subsection{Markov chains}
I.i.d. draws from a probability mass function is a special case of
a run of a discrete Markov chain. 
We investigate which Markov chains have an equivalent
of the strong law of large numbers.
Theorem~\ref{theo.1} then holds {\em mutatis mutandis} 
for these Markov chains.
First we need a few definitions.
\begin{definition}\label{def.m}
\rm
A sequence of random variables $(X_t)_{t=0}^{\infty}$ with
outcomes in a finite or countable state space $S \subseteq {\cal N}$
is a {\em discrete time-homogeneous Markov chain}
if for every ordered pair $i,j$ of states the quantity
$q_{i,j} = \Pr (X_{t+1} = j | X_t = i)$ called the
{\em transition probability} from state $i$ to state $j$,
is independent of $t$. If $M$ is such a Markov chain
then its associated {\em transition matrix} $Q$ is defined as
$Q:=(q_{i,j})_{i,j \in {\cal N}}$. The matrix $Q$ is
non-negative and its row sums are all unity. It is infinite dimensional
when the number of states is infinite.
\end{definition}
In the sequel we simply speak of ``Markov chains'' and
assume they satisfy Definition~\ref{def.m}.
\begin{definition}\label{def.s}
\rm
A Markov chain $M$ is {\em ergodic} if it has a stationary distribution 
$\pi=(\pi_x)_{x \in S}$ satisfying
$\pi Q= \pi$ and for every distribution $\sigma \neq \pi$ holds
$\sigma Q \neq \sigma$. 
This stationary distribution $\pi$ satisfies $\pi_x >0$
for all $x \in S$ and $\sum_{x \in S} \pi_x=1$. With
$X_t$ being the state of the Markov chain at epoch $t$
starting from $X_0=x_0 \in S$ we have
\begin{equation}\label{eq.erg}
\lim_{n \rightarrow \infty} \frac{1}{n} \sum_{t=1}^n X_t
=  {\bf E}_{\pi}[X]= \sum_{x \in S} \pi_x x,
\end{equation}
approximating theoretical means by sample means.
An ergodic Markov chain is {\em computable} if its transition probabilities
and stationary distribution are computable.
\end{definition}
\begin{corollary}\label{cor.2}
\rm
{\sc Identification Computable Ergodic Markov Chains}
Consider a c.e. or co-c.e. set of halting algorithms for computable
ergodic  Markov chains. 
Let $M$ be an element of this set.
There is an algorithm identifying $M$ in the limit almost surely from
an infinite sequence $x_1,x_2, \ldots$ of states of $M$ produced by
a run of $M$. The code for $M$ via an appropriate Turing machine
is finite. The learning process uses only a finite initial segment
of the data sequence and takes finite time.
\end{corollary}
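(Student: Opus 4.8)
The plan is to show that the identification algorithm of Theorem~\ref{theo.1} carries over \emph{mutatis mutandis}, with sample frequencies replaced by empirical transition frequencies and the ordinary strong law of large numbers replaced by its ergodic analogue recorded in Definition~\ref{def.s}, equation~\eqref{eq.erg}. Since i.i.d.\ drawing is the special case of a one-step memoryless chain, I only need to verify that the parameters of an ergodic Markov chain can be consistently estimated from a single run and that the list-matching machinery of Theorem~\ref{theo.1} then applies verbatim.

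First I would set up the estimators. From the observed prefix $x_1, \ldots, x_n$ let $N_i(n)$ be the number of visits to state $i$ and $N_{i,j}(n)$ the number of $i \to j$ transitions, and put $\hat{q}_{i,j}^{(n)} = N_{i,j}(n)/N_i(n)$ and $\hat{\pi}_i^{(n)} = N_i(n)/n$. Applying the ergodic law to the indicator functions $\mathbf{1}_{\{X_t = i\}}$ and, on the pair process $(X_t, X_{t+1})$, to $\mathbf{1}_{\{(X_t,X_{t+1}) = (i,j)\}}$, one gets almost surely $\hat{\pi}_i^{(n)} \to \pi_i$ and $\hat{q}_{i,j}^{(n)} \to q_{i,j}$ for every pair of states occurring in the run. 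Because ergodicity forces $\pi_x > 0$ for all $x \in S$ (Definition~\ref{def.s}), every state is visited infinitely often almost surely, so each such estimate is eventually based on unboundedly many observations and the denominators $N_i(n)$ grow without bound.

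Next I would run the selection procedure of Theorem~\ref{theo.1} on the linearly ordered c.e.\ (or co-c.e.) list $\mathcal{A}$ of halting algorithms for computable ergodic Markov chains: after reading $x_n$, output the least-indexed $M' \in \mathcal{A}$ whose transition probabilities and stationary distribution agree with $\hat{q}^{(n)}$ and $\hat{\pi}^{(n)}$ on the finitely many states seen so far, to within a tolerance shrinking to $0$ with $n$. Two distinct ergodic chains in $\mathcal{A}$ must differ in some transition probability $q_{i,j}$ between states $i,j$ that, having $\pi_i > 0$, are visited infinitely often; hence almost surely each wrong chain preceding the target $M$ is rejected permanently once $\hat{q}_{i,j}^{(n)}$ settles near the true value, while $M$ is never permanently rejected since its parameters match the converging estimates. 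Thus the output stabilizes at $M$ almost surely, and since the target is described by a single halting algorithm in $\mathcal{A}$, its code is finite and identification uses only a finite initial segment and finite time.

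The main obstacle, exactly as in Theorem~\ref{theo.1}, is coordinating the shrinking tolerance with the almost-sure convergence so that no false permanent rejection of $M$ occurs while still guaranteeing permanent rejection of every earlier wrong chain; the difficulty is compounded here by a possibly infinite state space, which forces all comparisons to be made on the growing-but-finite set of observed states, and by the fact that the full set of halting algorithms for computable ergodic chains is not c.e.\ by Lemma~\ref{lem.incomp}, so the argument genuinely needs the c.e.\ or co-c.e.\ hypothesis on $\mathcal{A}$.
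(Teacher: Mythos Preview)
Your proposal is correct and follows essentially the same approach the paper intends: the paper gives no separate proof of Corollary~\ref{cor.2} at all, merely asserting before the Markov-chain subsection that ``Theorem~\ref{theo.1} then holds \emph{mutatis mutandis}'' once one has an ergodic analogue of the strong law (equation~\eqref{eq.erg}), and Example~\ref{exam.e} records that the stationary distribution is computable from a computable transition matrix. Your write-up is in fact more explicit than the paper, since you spell out that the parameters to be matched are the transition probabilities $q_{i,j}$, obtained as ratios of pair-visit to single-visit counts via the ergodic law applied to indicator functions on the pair process $(X_t,X_{t+1})$; the paper leaves this implicit in ``mutatis mutandis.'' The fluctuation-rate issue you flag as the main obstacle---choosing a shrinking tolerance compatible with almost-sure convergence, analogous to the $\sqrt{(\lg n)/n}$ bound in the proof of Theorem~\ref{theo.1}---is likewise not addressed separately by the paper.
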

\begin{example}\label{exam.e}
\rm
Let $M$ be an ergodic Markov chain with a finite set $S$ of states.
There exists a unique distribution $\pi$
over $S$ with strictly positive probabilities such that
\[ \lim_{s \rightarrow \infty} q_{i,j}^s = \pi_j , \]
for all states $i$ and $j$. In this case we have that
$\pi^0 Q^t \rightarrow \pi$ pointwise
as $t \rightarrow \infty$ and the limit is independent of $\pi^0$.
The stationary distribution $\pi$ is the unique vector
satisfying $\pi Q = \pi$, where $\sum_i \pi_i =1$.
(Necessary and sufficient conditions for ergodicity are that
the chain should be {\em irreducible}, that is
for each pair of states $i,j$
there is an $s \in {\cal N}$ such that $q_{i,j}^s > 0$ (state $j$
can be reached from state $i$ in a finite number of steps);
and {\em aperiodic}, the $\mbox{gcd}\{s:q_{i,j}^s>0\}=1$ for all
$i,j \in T$.

Equation $\pi Q = \pi$ is a system of $N$ linear equations in $N$ unknowns
(the entries $\pi_j$). We can solve the unknowns by elimination of variables:
in the first equation express one variable in terms of the others;
substitute the expression into the remaining equations; repeat this process
until the last equation; solve it and then back substitute until
the total solution is found.  

Since $\pi$ is unique the system of linear
equations has a unique solution. If
the original entries of $Q$ are computable, then this process 
keeps the entries of $\pi$ computable as well. Therefore, if the
transition probabilities of the Markov chain are computable, then the 
stationary distribution $\pi$ is a computable probability mass function.
We now invoke the Ergodic Theorem
approximating theoretical means by sample means \cite{Fe68,La03} as
in \eqref{eq.erg}. 
\end{example}

\section{Computable Measures}\label{sect.3}
In the i.i.d. case we dealt with a process where the
future was independent of the present or the past,
in the Markov case we extended this independence such that the immediate
future is determined by the present but not by the past of too long ago. 
What can be shown if we drop the assumption of independence altogether?
Then we go to measures as defined in Appendix~\ref{sect.measure}.
As far as the authors are aware, for general measures 
there exist neither an approximation as in 
Section~\ref{sect.1} nor an analog of the strong law of large numbers.
However, there is a notion of typicality of an infinite data sequence
for a computable measure in the Martin-L\"of theory of sequential
tests \cite{Ma66} based on Kolmogorov complexity,
and this is what we use.

Let $L \subseteq {\cal N}$ and
$\mu$ be a measure on $L^{\infty}$ in a c.e. or co-c.e. 
set of halting algorithms for computable measures. 
In this paper instead of the common notation $\mu(\Gamma_x)$ we use
the simpler notation $\mu(x)$.
We are given a sequence in $L^{\infty}$
which is typical (Definition~\ref{def.typical} 
in Appendix~\ref{sect.measure}) for $\mu$.
The constituent elements of the sequence are possibly dependent.
The set of typical 
infinite sequences of a computable measure $\mu$ have $\mu$-measure one,
and each typical sequence passes all computable tests for 
$\mu$-randomness in the sense of Martin-L\"of.
This probability model is much more general than i.i.d. drawing 
according to a probability mass function. It includes stationary processes,
ergodic processes, Markov processes of any order, and many other models. In particular, this probability model includes many of the models used in mathematical psychological and cognitive science. 

\begin{theorem}\label{theo.3}
{\sc Computable Measure Identification}
Let $L$ be a set of natural numbers. We are given
an infinite sequence of elements from $L$ and this sequence
is guarantied to be typical for at least one measure in a c.e. or co-c.e.
set of halting algorithms for computable measures.
There is an algorithm which
identifies in the limit (certainly) a computable measure in the c.e. or co-c.e set 
above for which the sequence is typical.
The code for this measure is an appropriate Turing machine
and finite. The learning process takes finite time 
and uses only a finite initial segment
of the data sequence. 
\end{theorem}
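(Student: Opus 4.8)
The plan is to reduce typicality to the finiteness of a lower-semicomputable randomness deficiency, and then to exploit the resulting asymmetry of semidecidability through a simple ``earliest surviving pair'' device.

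First I would fix, using the hypothesis, a computable enumeration $\nu_1,\nu_2,\ldots$ of the given c.e.\ set of halting algorithms for computable measures (the co-c.e.\ case is treated by the dual construction already used for Theorem~\ref{theo.1}; recall that by Lemma~\ref{lem.incomp} the set is necessarily a \emph{proper} subset of all computable measures, which is exactly what makes such an enumeration available). For each $\nu_i$ the Martin-L\"of theory of sequential tests \cite{Ma66} provides, uniformly in $i$, a universal test: a uniformly c.e.\ family of critical regions $U^{(i)}_1\supseteq U^{(i)}_2\supseteq\cdots$ with $\nu_i(U^{(i)}_m)\le 2^{-m}$. Writing $\omega=x_1x_2\cdots$ for the data sequence, typicality of $\omega$ for $\nu_i$ (Definition~\ref{def.typical}) is, by the Levin--Schnorr characterisation, equivalent to the finiteness of the rejection level $\ell_i:=\sup\{m:\text{some prefix of }\omega\text{ enters }U^{(i)}_m\}$, equivalently to $\sup_n[-\log\nu_i(x_{1:n})-K(x_{1:n})]<\infty$.

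The crucial point is an asymmetry of semidecidability. Because the regions $U^{(i)}_m$ are uniformly c.e.\ and $K$ is upper semicomputable, there is a computable, nondecreasing stage-$s$ approximation $\ell_i^{(s)}\nearrow\ell_i$, obtained by dovetailing the enumeration of $U^{(i)}$, the revelation of longer prefixes of $\omega$, and the approximation of $\nu_i$ and $K$. Thus \emph{non}-typicality ($\ell_i=\infty$) is confirmable in the limit, whereas typicality ($\ell_i<\infty$) is never confirmable in finite time; the hypothesis of the theorem is exactly that $\ell_i<\infty$ for at least one $i$. I would then run the following algorithm. Fix a computable enumeration $p_1,p_2,\ldots$ of all pairs $(i,b)$ of natural numbers, and call a pair $(i,b)$ \emph{alive at stage $s$} if $\ell_i^{(s)}\le b$; since $\ell_i^{(s)}$ is nondecreasing, aliveness is lost once and for all, so a pair is either permanently alive (iff $\ell_i\le b$) or dies at a finite stage. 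At stage $s$ output the first coordinate of the earliest pair among $p_1,\ldots,p_s$ that is alive at stage $s$. Because some $\nu_i$ is typical, the pair $(i,\ell_i)$ is permanently alive, so permanently alive pairs exist; let $p^*=(i^*,b^*)$ be the earliest of them in the enumeration. Each of the finitely many pairs preceding $p^*$ fails to be permanently alive, hence dies at a finite stage, so after some stage $S$ all of them are dead while $p^*$ is alive at every stage; thus the output equals $i^*$ for all $s\ge S$. Since $\ell_{i^*}\le b^*<\infty$, the identified measure $\nu_{i^*}$ is typical for $\omega$; its code is the finite Turing machine $\nu_{i^*}$, and only the finite prefix of $\omega$ inspected up to the (unknown) stabilisation stage $S$ is used, in finite time, as the statement claims. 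The device never tries to confirm typicality---which is impossible---but only waits out the deaths of the finitely many earlier non-typical pairs, which is exactly what the semidecidability of non-typicality permits.

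I expect the main obstacle to lie not in the convergence argument, which the pair device renders routine, but in establishing its inputs: namely, that the universal sequential tests and the monotone approximation $\ell_i^{(s)}$ can be produced \emph{uniformly in} $i$ for every member of the set, together with a clean treatment of the co-c.e.\ case, where the members cannot be listed directly; there one would enumerate candidate programs provisionally and let a candidate also ``die'' as soon as it is enumerated into the complement, so that the same pair device applies, exactly as in Theorem~\ref{theo.1}. The conceptual heart is the asymmetry itself: typicality is only co-semidecidable, and the entire argument turns on arranging matters so that identification never requires verifying typicality, only the finitely many refutations of its failure among the candidates preceding the target.
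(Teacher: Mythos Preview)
Your proposal is correct and follows essentially the same route as the paper: both reduce typicality to finiteness of a lower-semicomputable deficiency (the paper uses $\sigma_i(j)=\log 1/\mu_i(x_1\ldots x_j)-K(x_1\ldots x_j)$ directly, you use the equivalent universal-test level $\ell_i$ via Levin--Schnorr), and both then output the earliest surviving candidate whose approximated deficiency stays below a paired bound. The only cosmetic difference is how the bound is attached: you enumerate pairs $(i,b)$ explicitly, whereas the paper arranges for each measure to recur infinitely often in the list $\mathcal{B}$ and uses the list position itself as the bound (so that some occurrence $h'$ of a typical $\mu_h$ eventually satisfies $\sigma_h<h'$)---the paper itself calls this ``reminiscent of the h-index,'' and your pair device is just a more standard packaging of the same trick.
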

The proof is deferred 
to Appendix~\ref{sect.proofs}. 
\footnote{
Theorem~\ref{theo.3} and Theorem~\ref{theo.1} are incomparable although
it is tempting to think the latter is a corollary of the former. 
The infinite sequences considered in Theorem~\ref{theo.3} are typical for
some computable measure. Restricted to i.i.d. measures 
(the case of Theorem~\ref{theo.1}) such sequences are a
proper subset from those resulting from i.i.d. draws from the 
corresponding probability mass function. This is the reason
why the result of Theorem~\ref{theo.3} is ``certain'' and the result
from Theorem~\ref{theo.1} is ``almost surely.''
}
We give an outline of the proof of Theorem~\ref{theo.3}. 
Let ${\cal B}$ be a list of a c.e. or co-c.e. set of 
halting algorithms for computable measures. 
Assume that each measure occurs infinitely many times in ${\cal B}$.
For a measure $\mu$ in the list ${\cal B}$ define
\[
\sigma(j)= \log 1/ \mu (x_1 \ldots x_{j} ) - K(x_1 \ldots x_{j}).
\]
By \eqref{eq.A3} in Appendix~\ref{sect.measure}, 
data sequence $x_1, x_2, \ldots$ 
is typical for $\mu$ iff
$\sup_j \sigma(j) = \sigma < \infty$.
By assumption there exists a measure in ${\cal B}$ 
for which the data sequence is typical. Let $\mu_h$ be such a measure.
Since halting algorithms for $\mu_h$ occur infinitely 
often in the list ${\cal B}$
there is a halting algorithm $\mu_{h'}$ in the list ${\cal B}$
with $\sigma_{h'}=\sigma_h$ and
$\sigma_h < h'$. This means that there exists a measure
$\mu_k$ in ${\cal B}$ for which the data sequence $x_1, x_2, \ldots$ 
is typical and $\sigma_k < k$ with $k$ least. 
\begin{example}
\rm
Let us look at some applications. 
Define the primitive recursive measures as the set of objects 
for which it is decidable that they are measures constructed from
primitive recursive functions. Since this set is computable it is c.e..
The theorem shows that
identification in the limit is possible for primitive recursive measures.

Define the time-bounded measures for any fixed computable time bound as
the set of objects for which it is decidable that they are measures
satisfying this time bound. Since this set is computable it is c.e..
Again, the theorem shows that
identification in the limit is possible for elements from this set. 

Let $L$ be a finite set of cardinality $l$, and $f_1,f_2, \ldots$
be a c.e. set of the primitive recursive functions with domain $L$. 
Computably enumerate the strings $x \in L^*$
lexicographical length-increasing. Then every string can be viewed
as the integer giving its position in this order. Let $\epsilon$ denote the
{\em empty word}, that is, the string of length 0. 
Confusion with the notation $\epsilon$ equals
a small quantity is avoided by the context.
Define $\mu_i(\epsilon) = f_i(\epsilon)/f_i(\epsilon)=1$, and
inductively for $x \in L^*$ and $a \in L$ 
define $\mu_i(xa)= f_i(xa)/ \sum_{a \in L} f_i(xa)$. 
Then $\mu_i(x)=\sum_{a \in L} \mu_i(xa)$ for all $x \in L^*$.
Therefore $\mu_i$ is a measure.
Call the c.e. set $\mu_1, \mu_2, \ldots$ the simple measures.
The theorem shows that
identification in the limit is possible for the set of simple measures.
\end{example}

\section{Prediction}\label{sect.4}
In Section~\ref{sect.1} 
the data are drawn i.i.d. according to an appropriate probability
mass function $p$ on the elements of $L$. Given $p$, 
we can predict the probability $p(a|x_1 ,\ldots, x_n)$ that
the next draw results in an element $a$ when the previous draws
resulted in $x_1 , \ldots , x_n$. (The resulting measure on 
$L^{\infty}$ is called an i.i.d. measure.) Once we have identified $p$,
prediction is possible (actually after a finite but unknown running
time of the identifying algorithm).
The same holds for a ergodic Markov chains (Corollary~\ref{cor.2}). This is reassuring for cognitive scientists and neuroscientists who see prediction as fundamental to cognition (\cite{Cl15}, \cite{El90}, \cite{HS98}, \cite{KFF07}).

For general measures as in Section~\ref{sect.3},
allowing dependent data, the situation is quite different.
We can meet the so-called black swan phenomenon of \cite{Po59}.
Let us give a simple example. The data sequence is $a,a, \dots $
is typical (Definition~\ref{def.typical})
for the measure $\mu_1$ defined by
$\mu_1(x)=1$ for every data sequence $x$
consisting of a finite or infinite string of $a$'s
and $\mu_1(x)=0$ otherwise.
But $a,a, \ldots$ is also typical for the measure $\mu_0$ defined by  
$\mu_0(x) = \frac{1}{2}$ for every string $x$ consisting of 
a finite or infinite string of $a$'s, and $\mu_0(x) = \frac{1}{2}$ for
a string $x$ consisting of initially a fixed number $n$ of 
$a$'s followed by a finite or
infinite string of $b$'s, and 0 otherwise. Then, $\mu_1$ and $\mu_0$
give different predictions with an initial $n$-length sequence of $a$'s.
But given a data sequence consisting initially
of only $a$'s, a sensible algorithm will predict $a$ as the most
likely next symbol.
However, if the initial data sequence consists of $n$ symbols $a$, then
for $\mu_1$ the next symbol will be $a$ with probability 1, 
and for $\mu_0$ the next
symbol is $a$ with probability $\frac{1}{2}$ and $b$ with probability 
$\frac{1}{2}$. Therefore, while the i.i.d. case allows
us to predict reliably, in the dependent case there is in general
no reliable predictor for the next symbol. 
In \cite{BD62}, however, Blackwell and Dubin show that
under certain conditions predictions of two 
measures merge asymptotically almost surely.

\section{Conclusion}\label{sect.concl}
Many psychological theories see learning from data, whether sensory or linguistic, as a central function of the brain. Such learning faces great practical difficulties---the space of possible structures is very large and difficult to search, and the computational power of the brain is limited, and the amount of available data may be limited. But it is not clear under what circumstances such learning is possible even with unlimited data and computational resources. Here we have shown that, under surprisingly general conditions, some positive results about identification in the limit in such contexts can be established. 

Using an infinite sequence of elements (or a finite sequence of 
large enough but unknown length) from a set of natural numbers,
algorithms are exhibited that identify in the limit 
the probability distribution associated with this set. 
This happens in two cases.
(i) The underlying set is countable and the target distribution is 
a probability mass function (i.i.d. measure) in a c.e. or co-c.e. set
of computable probability mass functions. 
The elements of the sequence are drawn i.i.d. according
to this probability (Theorem~\ref{theo.1}). This result is extended to
computable ergodic Markov chains (Corollary~\ref{cor.2}).
(ii) The underlying set is countable and the infinite sequence 
is possibly dependent and is typical for a computable measure in a 
c.e. or co-c.e. set of computable measures (Theorem~\ref{theo.3}).

In the i.i.d. case and the ergodic Markov chain case the target 
is identified in the limit {\em almost surely}, 
and in the dependent case the target computable measure is
identified in the limit {\em surely}---however it is not unique but
one out of a set of satisfactory computable measures. 
In the i.i.d. case and Markov case we use the strong law of large numbers. 
For the dependent case we 
use typicality according to the theory 
developed by Martin-L\"of in \cite{Ma66} embedded in the theory of
Kolmogorov complexity. 

In both the i.i.d., the Markovian, and the dependent settings, 
eventually we guess
an index of the target (or one target out of some possible targets in the 
measure case) and stick to this guess forever. This last guess is correct. 
However, we do not know when the guess becomes permanent.
We use only a finite unknown-length
initial segment of the data sequence. 
The target for which the guess is correct
is described by a an appropriate Turing machine computing
the probability mass function, Markov chain, or measure, respectively.

These results concerning algorithms for identification in the limit consider what one might term the ``outer limits'' of what is learnable, by abstracting away from computational restrictions and a finite amount of data available to human learners. Nonetheless, such general results may be informative when attempting to understand what is learnable in more restricted settings. Most straightforwardly, that which is not learnable in the unrestricted case will, {\em a fortiori}, not be learnable when computational or data restrictions are added. It is also possible that some of the proof techniques used in the present context can be adapted to analyse more restricted, and hence more cognitively realistic, settings. 

\appendix

\subsection{Computability}\label{sect.comput}
We can interpret a pair of integers such as $(a,b)$ as rational $a/b$.
A real function $f$ with rational argument is \emph{lower semicomputable}
if it is defined by a rational-valued computable function $\phi(x,k)$
 with $x$ a rational number and $k$ a nonnegative integer
such that $\phi(x,k+1) \geq \phi(x,k)$ for every $k$ and
  $\lim_{k \rightarrow \infty} \phi (x,k)=f(x)$.
This means that $f$ 
 can be computably approximated arbitrary close from below
 (see \cite{LV08}, p. 35).  A function $f$ is  \emph{upper semicomputable}
if $-f$ is semicomputable from below.
 If a real function is both lower semicomputable 
and upper semicomputable then it is \emph{computable}.
A function $f: {\cal N} \rightarrow {\cal R}^+$ 
is a {\em probability mass function} if 
$\sum_x f(x) = 1$.
It is customary to write $p(x)$ for
$f(x)$ if the function involved is a probability mass function.

A set $A \subseteq {\cal N}$ is {\em computable enumerable} (c.e.)  when
we can compute the enumeration $a_1,a_2, \ldots $ with $a_i \in A$ ($i \geq 1$).
A c.e. set is also
called recursively enumerable (r.e.). A {\em co-c.e.} set 
$B \subseteq {\cal N}$ is a set whose 
complement ${\cal N} \setminus B$ is c.e.. 
(A set is c.e. iff it is at level $\Sigma_1^0$ of the arithmetic hierarchy
and it is co-c.e. iff it is at level $\Pi_1^0$.)
If a set is both c.e. and co-c.e. then it is {\em computable}. 
A {\em halting} algorithm for a {\em computable function} 
$f: {\cal N} \rightarrow {\cal R}$ is an algorithm which given  
an argument $x$ and any rational $\epsilon > 0$ computes a total
computable rational function
$\hat {f}: {\cal N} \times {\cal Q} \rightarrow {\cal Q}$ 
such that $|f(x)-\hat {f}(x, \epsilon)| \leq \epsilon$.

\begin{example}
\rm
We give an example of the relation between co-c.e. and identification in the
limit. Consider a  c.e. set $A$ 
of objects and the co-c.e. set $B$ 
such that ${\cal N} \setminus B=A$.
We call the members of $B$ 
the good objects and the members of $A$ the bad objects. 
We do not know in what order the bad objects are enumerated or repeated; 
however we do know that the remaining items are the good objects.
These good objects with possible repetitions 
form the enumeration $B$. 
It takes unknown time to 
enumerate an initial segment of $B$,
but we are sure this happens eventually. 
Hence to identify the $k$th element in the enumeration $B$
requires identification of the first $1, \ldots, k-1$ 
elements. This constitutes 
identification in the limit.
\end{example}

\begin{example}
\rm
It is known that the overwhelming majority of real numbers are not
computable. If a real number
$a$ is lower semicomputable but not computable, 
then we can computably find nonnegative
integers $a_1, a_2, \ldots$ and $b_1, b_2, \ldots$ such that
$a_n/b_n \leq a_{n+1}/b_{n+1}$ and
$\lim_{n \rightarrow \infty} a_n/b_n = a$. If $a$ is the
probability of success in a trial then this gives an example
of a lower semicomputable probability mass function which is not computable.
\end{example}
Suppose we are concerned with all and only computable probability
mass functions. There are countably many since there are only
countably many computable functions. But can we computably enumerate them?
\begin{lemma}\label{lem.incomp}
(i) Let $L \subseteq {\cal N}$ and infinite.
The computable positive probability mass functions 
on $L$ are not c.e.. 

(ii) Let $L \subseteq {\cal N}$ with $|L|\geq 2$. 
The computable positive measures on $L$ are not c.e..
\end{lemma}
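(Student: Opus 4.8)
The plan is to prove both parts by diagonalization. In each case I assume the set in question is c.e.\ and derive a contradiction by constructing a single computable positive object (a p.m.f.\ in (i), a measure in (ii)) that differs from every member of the list. The starting observation is that a c.e.\ set of \emph{halting algorithms} for computable p.m.f.s (resp.\ measures) yields a \emph{uniformly} computable family of approximators: since the list of algorithms is c.e.\ and each algorithm is total (it approximates its function to within any requested rational $\epsilon$), the map $(i,x,\epsilon)\mapsto\hat p_i(x,\epsilon)$ (resp.\ $\hat\mu_i$) is computable. This uniformity is exactly what the diagonal construction consumes.

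For (i): fix a computable enumeration $a_1,a_2,\dots$ of the infinite set $L$ and the base p.m.f.\ $b_j=2^{-j}$, reserving $a_1$ as a ``sink''. For each $i\ge 1$ I diagonalize against $p_i$ at coordinate $a_{i+1}$. Because real equality is undecidable I cannot simply ``pick a value $\neq p_i(a_{i+1})$''; instead I approximate $p_i(a_{i+1})$ to precision $b_{i+1}/4$ and choose $q(a_{i+1})$ from the two candidates $\{b_{i+1}/2,\,\tfrac32 b_{i+1}\}$. These candidates are $b_{i+1}$ apart while the uncertainty interval has width $b_{i+1}/2$, so at least one candidate lies outside it and is therefore provably $\neq p_i(a_{i+1})$; this choice is computable. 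Each choice perturbs the base by $\pm b_{i+1}/2$, so the net perturbation $P=\sum_i(\pm b_{i+1}/2)$ is computable with $|P|\le \tfrac14$. Setting $q(a_1)=\tfrac12-P$ restores $\sum_x q(x)=1$ while keeping every coordinate strictly positive. Then $q$ is a computable positive p.m.f.\ with $q\neq p_i$ for all $i$, contradicting the assumed completeness of the list.

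For (ii) the object to exploit is the branching freedom of a measure on $L^*$: redistributing the mass of a node $x$ among its children alters the subtree below $x$ but leaves $\nu(x)$ and every node not below $x$ untouched, so local perturbations never disturb the consistency equations $\nu(x)=\sum_{a\in L}\nu(xa)$. Fix two distinguished symbols $0,1\in L$ and a positive computable p.m.f.\ $\rho$ on $L$ to serve as an i.i.d.\ base on off-spine subtrees. I walk down the spine $\epsilon,0,0^2,\dots$; at each node $0^n$ (with $\nu(0^n)>0$ already fixed) I choose positive children $\nu(0^na)$ summing to $\nu(0^n)$, which is possible with one real degree of freedom to spare since $|L|\ge 2$, arranging $\nu(0^{n+1})>0$ so the spine survives and pinning $\nu(0^n1)$ (again by the two-candidate trick applied to an approximation of $\mu_n(0^n1)$) to a value $\neq\mu_n(0^n1)$. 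Below every off-spine node I set $\nu$ equal to the scaled base measure. The result is a positive computable measure $\nu$ with $\nu(0^n1)\neq\mu_n(0^n1)$ for all $n$, hence $\nu\neq\mu_n$ for every $n$, the desired contradiction.

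The routine parts are the geometric bookkeeping; the two genuine difficulties are the same in both parts. First, because real-number equality is not decidable, the diagonal choices must be made through the two-candidate/margin device on finite-precision approximations rather than by a direct inequality. Second, the diagonal modifications must be reconciled with the global normalization $\sum_x q(x)=1$ in (i) and the measure-consistency equations in (ii) while preserving \emph{both} computability and strict positivity; the reserved sink coordinate in (i) and the local redistribution freedom in (ii)---available precisely because $|L|\ge 2$---are what make this reconciliation possible. I expect the main obstacle to be stating the second point cleanly, i.e.\ verifying that the single sink (resp.\ the per-node freedom) simultaneously absorbs all perturbations, keeps the object positive, and keeps it computable.
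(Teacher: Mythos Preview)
Your argument is correct, but it proceeds quite differently from the paper's. For (i) the paper does not use a base distribution, a sink coordinate, or the two-candidate device; instead it pairs consecutive coordinates and sets, for odd $i$, $p(a_i):=p_i(a_i)+p_i(a_i)p_{i+1}(a_{i+1})$ and $p(a_{i+1}):=p_{i+1}(a_{i+1})-p_i(a_i)p_{i+1}(a_{i+1})$. Because these are explicit algebraic combinations of the computable reals $p_i(a_i),p_{i+1}(a_{i+1})$, the result is automatically computable and the strict inequalities $p(a_i)>p_i(a_i)$, $p(a_{i+1})<p_{i+1}(a_{i+1})$ sidestep the undecidability of real equality without any approximation step. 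Your approach is longer but more transparent: the sink coordinate makes the normalization $\sum_x q(x)=1$ visibly hold (the paper's pairing gives $\sum_j p(a_j)=\sum_j p_j(a_j)$, whose value is not obviously $1$), and your explicit geometric bounds make positivity and computability of $q(a_1)$ immediate. For (ii) the contrast is sharper: the paper dispatches it in one line by noting that $L^*$ is countable and c.e., so the cylinder values reduce the problem to the setting of (i); you instead build a genuinely new measure by diagonalizing down the spine $0,0^2,\ldots$ and filling off-spine subtrees with a scaled i.i.d.\ base. Your route is self-contained and makes the role of the hypothesis $|L|\ge 2$ explicit (it supplies the per-node degree of freedom needed to diagonalize while keeping the spine alive), whereas the paper's reduction is terse and leaves the reader to check that the diagonal object produced in (i) can actually be realized as a \emph{measure} rather than just a function on $L^*$.
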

\begin{proof}
(i) Assume to the contrary that the lemma is false 
and the computable enumeration is
$p_1,p_2, \ldots .$
Compute a probability mass function $p$
with $p(a) \neq p_i(a_i)$ for $a_i \in L$ is the $i$th element of $L$ 
as follows. If $i$ is odd then 
$p(a_i) := p_i(a_i)+p_i(a_i)p_{i+1}(a_{i+1})$
and $p(a_{i+1}) := p_{i+1}(a_{i+1})-p_i(a_i)p_{i+1}(a_{i+1})$.
By construction $p$ is a computable positive probability
mass function but
different from any $p_i$ in the enumeration $p_1, p_2, \ldots$.

(ii) The set $L^*$ is c.e..
Hence the set of cylinders in $L^{\infty}$ is c.e..
Therefore (ii) reduces to (i). 
\end{proof}
\begin{remark}
\rm
Every probability mass function is positive on some support 
$L \neq \emptyset$ and 0 otherwise. Hence Lemma~\ref{lem.incomp} holds for
all probability mass functions.
\end{remark}

\subsection{Kolmogorov Complexity}\label{sect.kolmcomp}
We need the theory of Kolmogorov complexity \cite{LV08} (originally
in \cite{Ko65} and the prefix version we use here originally in \cite{Le74}). 
A prefix Turing machine is a Turing machine with a one-way 
read-only input
tape with a distinguished tape cell called the {\em origin}, 
a finite number of two-way read-write working tapes 
on which the computation takes place, an auxiliary tape on
which the auxiliary string $y \in \{0,1\}^*$ is written, and
a one-way write-only output tape. 
At the start of the computation the input tape
is infinitely inscribed from the origin onwards, and the input head
is on the origin. The machine operates with a binary alphabet.
If the machine halts then the input head has scanned
a segment of the input tape from the origin onwards. We
call this initial segment the {\em program}.  

By the construction above, for every auxiliary $y \in \{0,1\}^*$, 
the set of programs is a prefix code: no program is a proper
prefix of any other program. 
Consider a standard enumeration of all prefix Turing
machines 
\[
T_1, T_2, \ldots . 
\]
Let $U$ denote a 
prefix Turing machine such that for every $z,y\in\{0,1\}^{*}$ and $i\geq1$
we have $U(i,z,y)=T_{i}(z,y)$. That is, for each finite binary program
$z$, auxiliary $y$, and machine index $i\geq1$, 
we have that $U$'s execution
on inputs $i$ and $z,y$ results in the same output as that obtained
by executing $T_{i}$ on input $z,y$. We call such a $U$ a {\em universal}
prefix Turing machine. 

However, there are more ways a prefix Turing machine
can simulate other prefix Turing machines. For example,
let $U'$ be such that $U'(i,zz,y)=T_i(z,y)$ for all $i$ and $z,y$,
and $U'(p)=0$ for $p \neq i,zz,y$ for some $i,z,y$.
Then $U'$ is universal also. To distinguish machines like $U$ 
with nonredundant input from
other universal machines, Kolmogorov \cite{Ko65} called them {\em optimal}. 

Fix an optimal machine, say $U$. Define
the conditional {\em prefix Kolmogorov complexity} $K(x|y)$ 
for all $x,y \in \{0,1\}^*$ by 
$K(x|y)=\min_p \{ |p|: p\in\{0,1\}^{*}\:\textrm{and}\: U(p,y)=x\}$.
(Here $U$ has two arguments rather than three. We consider the first argument
to encode the first two arguments of the previous three.)
For the same $U$, define the {\em time-bounded conditional 
prefix Kolmogorov complexity}
$K^t(x|y)=\min_p\{ |p|:\, p\in\{0,1\}^{*}\:
\textrm{and}\; U(p,y)=x\;\textrm{in $t$ steps}\} $.
To obtain the unconditional versions of the prefix Kolmogorov complexities
set $y = \epsilon$ where $\epsilon$ is the {\em empty word}
(the word with no letters). 
It can be shown that $K(x|y)$ is incomputable \cite{Ko65}. 
Clearly $K^t(x|y)$
is computable if $t < \infty$. Moreover,
$K^{t'} (x|y) \leq K^t(x|y)$ for every $t' \geq t$, and
$\lim_{t \rightarrow \infty} K^t(x|y) = K(x|y)$.

\subsection{Measures and Computability}\label{sect.measure}
Let $L \subseteq {\cal N}$.
Given a finite sequence $x=x_1, x_2, \ldots , x_n$ of elements
of $L$, we consider the set of
infinite sequences starting with $x$. The set of all such sequences is
written as $\Gamma_x$, the {\em cylinder} of $x$. 
We associate a
probability $\mu (\Gamma_x)$ with the event 
that an element of $\Gamma_x$ occurs.
Here we simplify the notation $\mu (\Gamma_x)$ and write $\mu (x)$.
The transitive closure of the intersection, complement, 
and countable union of cylinders gives a set of subsets of $L^{\infty}$.
The probabilities associated with these subsets are derived from the
probabilities of the cylinders in standard ways \cite{Ko33a}.
A {\em measure} $\mu$ satisfies the following equalities:
\begin{eqnarray}\label{eq.me}
&& \mu (\epsilon) = 1 
\\&& \mu (x) = \sum_{a \in L} \mu(xa). 
\nonumber
\end{eqnarray}
Let $x_1 ,x_2 , \ldots$ be an infinite sequence of elements of
$L$. The sequence is typical
for a computable measure $\mu$ if it passes 
all computable sequential tests (known and unknown alike)
for randomness with respect to $\mu$. These tests are formalized by
Martin-L\"of \cite{Ma66}. One of the highlights of the theory 
of Martin-L\"of is that the sequence passes all these tests iff it passes
a single computable universal test, \cite[Corollary 4.5.2 on p 315]{LV08},
see also \cite{Ma66}.
\begin{definition}\label{def.typical}
\rm
Let $x_1 ,x_2 , \ldots$ be an infinite sequence of elements of
$L \subseteq {\cal N}$. The sequence is {\em typical} or
{\em random} for a computable measure $\mu$ iff 
\begin{equation}\label{eq.A3}
\sup_n \{\log \frac{1}{\mu (x_1 \ldots x_n )} - K(x_1 \ldots x_n )\}
< \infty .  
\end{equation}
\end{definition}
The set of infinite sequences that are typical with respect
to a measure $\mu$ have $\mu$-measure one. 
The theory and properties of such sequences for computable measures
 are extensively treated 
in \cite[Chapter 4]{LV08}. There the term $K(x_1 \ldots x_n )$ in 
\eqref{eq.A3} is given as $K(x_1 \ldots x_n |\mu)$. However, since 
$\mu$ is computable we have $K(\mu) < \infty$ and therefore 
$K(x_1 \ldots x_n |\mu) \leq K(x_1 \ldots x_n)+O(1)$.

\begin{example}
\rm
Let $k$ be a positive integer and fix an $a \in \{1, \ldots, k\}$.
Define  measure $\mu_k$ 
by $\mu_k(\epsilon)=1$ and $\mu_k(x_ 1\ldots x_n)=1/k$ for $n \geq 1$ 
and $x_i = a$ for
every $1 \leq i \leq n$,
and $\mu_k(x_ 1\ldots x_n)= (k^{-n}-1)(1-1/k)$ otherwise. 
Then $K(a \ldots a)$ (a sequence
of $n$ elements $a$) equals 
$K(i,n)+O(1) = O(\log n + \log k)$.
(A sequence of $n$ elements $a$ is described by $n$ in $O(\log n)$
bits and $a$ in $O(\log k)$ bits.) 
By \eqref{eq.A3} we have $\sup_{n \in {\cal N}} \{ \log 1/\mu_k(a \ldots a) 
- K(a \ldots a) \} < \infty$. Therefore the infinite 
sequence $a,a, \ldots$ is typical
for every $\mu_k$. However, 
the infinite sequence $y_1,y_2, \ldots $ 
is not typical for $\mu_k$ with $y_i \in \{1, \ldots ,k\}$ ($i \geq 1$)
and $y_i \neq y_{i+1}$ for some $i$.
Namely, $\sup_{n \in {\cal N}} \{ 1/\mu_k(y_1 y_2\ldots y_n) - 
K(y_1y_2\ldots y_n) \} = \infty$.
\end{example}
Since $k$ can be any positive integer, 
the example shows that an infinite sequence of data can be typical for more
than one measure. Hence our task is not to identify a single 
computable measure according to which the
data sequence was generated as a typical sequence,
but to identify a computable measure that {\em could}
have generated the data sequence as a typical sequence.

\subsection{Proofs of the Theorems}\label{sect.proofs}
\begin{proof} {\sc of Theorem~\ref{theo.1}: I.I.D. Computable Probability 
Identification}. Let $L \subseteq {\cal N}$, and
$X_1, X_2, \ldots $ be a sequence of mutually independent random variables,
each of which is a copy of a single random
variable $X$ with probability mass function $P(X=a)=p(a)$ for $a \in L$. 
Without loss of generality $p(a)>0$ for all $a \in L$.
Let $\#a(x_1,x_2, \ldots, x_n)$ denote the number of times 
$x_i =a$ ($1 \leq i \leq n$) for some fixed $a \in L$.
\begin{claim}\label{claim.slln}
If the outcomes of the random variables $X_1,X_2, \ldots$ are 
$x_1,x_2, \ldots ,$
then almost surely for all $a \in L$ we have
\begin{equation}\label{eq.strong}
\lim_{n \rightarrow \infty} \; 
\left( p(a)-\frac{\#a(x_1,x_2, \ldots ,x_n)}{n} \right) = 0.
\end{equation}
\end{claim}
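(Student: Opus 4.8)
The plan is to recognise \eqref{eq.strong} as a direct instance of the strong law of large numbers, applied separately to each symbol $a \in L$, and then to glue the countably many resulting almost-sure statements into a single one. First I would fix $a \in L$ and introduce the indicator variables $Y_i^{(a)} = 1$ if $X_i = a$ and $Y_i^{(a)} = 0$ otherwise, for $i = 1, 2, \ldots$. Since $X_1, X_2, \ldots$ are mutually independent copies of $X$, the variables $Y_1^{(a)}, Y_2^{(a)}, \ldots$ are i.i.d.\ Bernoulli with mean ${\bf E}[Y_i^{(a)}] = P(X_i = a) = p(a)$; being bounded by $1$, they are automatically integrable. By construction $\#a(x_1, \ldots, x_n) = \sum_{i=1}^n Y_i^{(a)}$, so the empirical frequency $\#a(x_1, \ldots, x_n)/n$ is exactly the sample mean of these indicators.

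Next I would invoke Kolmogorov's strong law of large numbers for i.i.d.\ integrable variables to obtain $\frac{1}{n}\sum_{i=1}^n Y_i^{(a)} \to {\bf E}[Y_1^{(a)}] = p(a)$ almost surely. Expressed in terms of its exceptional set, this gives an event $E_a$ with $P(E_a) = 1$ on which the convergence in \eqref{eq.strong} holds for that particular $a$.

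Finally I would pass from ``for each $a$'' to ``for all $a$ simultaneously.'' Because $L \subseteq {\cal N}$ is countable, the set $E = \bigcap_{a \in L} E_a$ is a countable intersection of probability-$1$ events, so its complement is a countable union of null sets and hence $P(E) = 1$. On $E$ the limit in \eqref{eq.strong} holds for every $a \in L$ at once, which is precisely the asserted ``almost surely for all $a \in L$.''

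The only point demanding genuine care is this last step: the phrasing of the claim requires a single exceptional null set that works uniformly across all symbols, rather than a separate null set per symbol. This is the main (and essentially the only) obstacle, and it is dispatched entirely by the countability of $L$. Everything else is a textbook application of the strong law, the indicators being bounded and therefore trivially integrable.
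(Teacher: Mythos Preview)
Your proof is correct and follows essentially the same route as the paper: for each fixed $a \in L$ reduce to i.i.d.\ Bernoulli indicators with mean $p(a)$ and invoke the strong law of large numbers. The paper's version sets up the Bernoulli variable slightly more awkwardly (using the values $a$ and $\bar{a}=\min(L\setminus\{a\})$ rather than $0/1$ indicators) and does not spell out the countable-intersection step needed to pass from ``for each $a$, almost surely'' to ``almost surely for all $a$,'' so your write-up is in fact a little cleaner and more complete.
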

\begin{proof}
The strong law of large numbers (originally in \cite{Ko33}, see also \cite{Ko33a} and \cite{Ca17})
states that if we perform the same 
experiment a large number of times, then almost surely 
the number of successes divided by the number of trials
goes to the expected value, 
provided the mean exists, see 
the theorem on top of page 260 in \cite{Fe68}. 
To determine the probability of an $a \in L$ we consider the 
random variables $X_a$ with just two outcomes $\{a, \bar{a}\}$. 
This $X_a$ is
a Bernoulli process $(q_a,1-q_a)$ where 
$q_a=p(a)$ is the probability of $a$ and 
$1-q_a= \sum_{b \in L\setminus \{a\}} p(b)$ 
is the probability of $\bar{a}$. If we set
$\bar{a} = \min \; (L \setminus \{a\})$, 
then the mean $\mu_a$ of $X_a$ is 
\[
\mu_a = aq_a+\bar{a}(1-q_a) \leq \max \{a, \bar{a}\} < \infty.
\]
Thus, every $a \in L$ incurs a random
variable $X_a$ with a finite mean.
Therefore, 
$(1/n)\sum_{i=1}^n (X_a)_i$ converges 
almost surely to $q_a$ as $n \rightarrow \infty$.
The claim follows.
\end{proof}

Let ${\cal A}$ be a list of a c.e. or co-c.e. set of halting algorithms for the
computable probability mass functions.
If $q \in {\cal A}$ and $q = p$ then for every $\epsilon >0$ 
and $a \in L$ holds $p(a)-q(a)\leq \epsilon$.
By Claim~\ref{claim.slln}, almost surely  
\begin{equation}\label{eq.=0}
\lim_{n \rightarrow \infty} \max_{a \in L} \; 
\left(q_i(a) -\frac{\#a(x_1,x_2, \ldots ,x_n)}{n} \right)
= 0.
\end{equation}
If $q \in {\cal A}$ and $q \neq p$ then 
there is an $a \in L$ and a constant $\delta >0$ 
such that $|p(a)-q(a)| > \delta$. 
Again by Claim~\ref{claim.slln}, almost surely
\begin{equation}\label{eq.>0}
\lim_{n \rightarrow \infty} \; 
\max_{a \in L} \left|q_i(a) -\frac{\#a(x_1,x_2, \ldots ,x_n)}{n}\right|
> \delta .
\end{equation}
In the proof \cite[p. 204]{Fe68} of the strong law of large numbers
it is shown that if we draw $x_1,x_2, \ldots$ i.i.d. from a set 
$L \subseteq {\cal N}$ according to a probability mass function $p$
then almost surely the size of the fluctuations in going to the limit
\eqref{eq.=0} 
satisfies $|np(a) - \#a(x_1,x_2, \ldots ,x_n)|/\sqrt{np(a)p(\bar{a})}
< \sqrt{2 \lambda \lg n}$ for every $\lambda >1$ and $n$ is
large enough, for all $a \in L$.
Here $\lg$ denotes the natural logarithm. 
Since $p(a)p(\bar{a}) \leq \frac{1}{4}$ and $\lambda = \sqrt{2}$
it suffices that $|p(a) - \#a(x_1,x_2, \ldots ,x_n)/n|
< \sqrt{(\lg n)/n}$ for all but finitely many $n$.

Let $q \in {\cal A}$. 
For $q \neq p$ there is an $a \in L$ such that
by \eqref{eq.>0} and the fluctuations in going to that
limit we have $|q(a) - \#a(x_1,x_2, \ldots ,x_n)/n|
> \delta - \sqrt{(\lg n)/n}$ for all but finitely many $n$. 
Since $\delta >0$ is constant,
we have $2\sqrt{(\lg n)/n} < \delta$ for all but finitely many $n$. Hence
$|q(a) - \#a(x_1,x_2, \ldots ,x_n)/n| > \sqrt{(\lg n)/n}$
for all but finitely many $n$.

Let ${\cal A}=q_1,q_2, \ldots$ and $p=q_k$ with $k$ least.
We give an algorithm  with as output a sequence of indexes 
$i_1,i_2, \ldots$ such that all but finitely
many indexes are $k$.
If $L=\{a_1,a_2, \ldots \}$ is infinite then the algorithm will only use a finite subset of it.
Hence we need to define this finite subset
and show that the remaining elements can be ignored.
Let $A_n = \{a \in L: \#a(x_1,x_2, \ldots ,x_n) > 0\}$. 
In case $a \in L$  but $a \not\in A_n$ we still have 
$|q_k(a) - \#a(x_1,x_2, \ldots ,x_n)/n| \leq 
\sqrt{(\lg n)/n}$ for all but finitely many $n$. 

Now define the following sets. For each $q_i \in {\cal A}$ the set
$B_{i,n} = \{a_1, \ldots, a_m\}$ with $m$ least such that 
$\sum_{j=m+1}^{\infty} q_i(a_j) = 1-\sum_{j=1}^m q_i(a_j) 
< \sqrt{1/n}$.
Therefore, if $a \in L \setminus B_{i,n}$ then $q_i(a) < \sqrt{1/n}$.
In contrast to the infinity of $L$ the sets $A_n$ and 
$B_{i,n}$ are finite for all $n$ and $i$. 

Define $L_{i,n}=A_n \bigcup B_{i,n}$. 
Since $L_{i,n} \subseteq L$ we have for every $a \in L_{i,n}$ that
$|q_k(a)- \#a(x_1,x_2, \ldots ,x_n)/n | \leq  \sqrt{(\lg n)/n}$
for all but finitely many $n$. However, for $q_i \neq q_k$ there is an 
$a \in L_{i,n}$ but no $a \in L \setminus L_{i,n}$ 
such that $|q_i(a)- \#a(x_1,x_2, \ldots ,x_n)/n | >
\sqrt{(\lg n)/n}$ for all but finitely many $n$. This leads to the 
following algorithm with $I$ the set of indexes of the elements in ${\cal A}$:

\begin{tabbing}
{\bf for} \= $n:=1,2, \ldots$\\

\> $I:=\emptyset$; \= {\bf for} \= $i:=1,2, \ldots, n$\\

\> \> {\bf if}
$\max_{a \in L_{i,n}} |q_i(a) - \#a(x_1,x_2, \ldots ,x_n)/n | 
 < \sqrt{(\lg n)/n}$\\

\>\> {\bf then} $I:=I \bigcup \{i\}$; \\

\> \> $i_n := \min I$
\end{tabbing}

With probability 1 
for every $i < k$ for all but finitely many $n$ we have $i \not\in I$ 
while $k \in I$ for all but finitely many $n$. 
(Note that for every $n=1,2, \ldots$ the main term in the above algorithm is
computable even if $L$ is infinite.) 
The theorem is proven. 
\end{proof}

\begin{proof} {\sc of Theorem~\ref{theo.3} Computable Measure Identification}
For the Kolmogorov complexity notions 
see Appendix~\ref{sect.kolmcomp}. For
the theory of computable measures, see 
Appendix~\ref{sect.measure}.
In particular
we use the criterion of Definition~\ref{def.typical} in 
Appendix~\ref{sect.measure} to
show that an infinite sequence is typical 
in Martin-L\"of's sense.  
The given data sequence $x_1,x_2, \ldots$ is by assumption typical 
for some computable measure $\mu$ in a c.e. or co-c.e. set of computable
measures and hence 
satisfies \eqref{eq.A3} with respect to $\mu$.
We stress that the data sequence is possibly 
typical for different computable measures.
Therefore we cannot speak of the single {\em true} computable measure, 
but only of {\em a} computable measure
for which the data is typical. 

Let ${\cal B}$ be an enumeration of halting algorithms 
for a c.e. or co-c.e. set of
computable measures such that each element occurs infinitely many
times in the list. If the enumeration is such that each element occurs
ony finitely many times, then the enumeration can be changed into one
where each element occurs infinitely many times. For instance, by repeating
the first element after every position in the original enumeration, 
repeating the second element in the original enumeration 
after every second position in the resulting enumeration, and so on.

\begin{claim}\label{claim.typical}
\rm
There is an algorithm with as input an enumeration 
${\cal B}= \mu_1,\mu_2, \ldots$
and as output a sequence 
of indexes $i_1,i_2, \ldots$. For every large enough $n$ we have 
$i_n= k$ with $\mu_k$ a computable measure 
for which the data sequence is typical.
\end{claim}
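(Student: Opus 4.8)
The plan is to turn the quantity $\sigma$ from the outline into something computably approximable from below, and then to output at each stage the least index whose current under-approximation still lies below its own index. Recall that for $\mu_i$ in ${\cal B}$ we put $\sigma_i(j)=\log 1/\mu_i(x_1\ldots x_j)-K(x_1\ldots x_j)$ and $\sigma_i=\sup_j\sigma_i(j)$, so that by \eqref{eq.A3} the sequence is typical for $\mu_i$ precisely when $\sigma_i<\infty$. The difficulty is that $K$ is incomputable, so neither $\sigma_i(j)$ nor $\sigma_i$ can be evaluated outright. I would replace $K$ by the time-bounded prefix complexity $K^t$ of Appendix~\ref{sect.kolmcomp}, which is computable and satisfies $K^t\downarrow K$, and approximate $\log 1/\mu_i$ from below by rationals, which is possible since each $\mu_i\in{\cal B}$ is a halting algorithm for a computable measure. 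With $t_n=n$ and a rational under-approximation $\widetilde{\log}$ computed to precision $2^{-n}$, set
\[
\hat\sigma_i(n)=\max_{1\le j\le n}\Bigl(\widetilde{\log}\tfrac{1}{\mu_i(x_1\ldots x_j)}-K^{t_n}(x_1\ldots x_j)\Bigr).
\]
Because each bracket is non-decreasing in $n$ (the log-approximation improves and $K^{t_n}$ does not increase) and the outer maximum ranges over more indices $j$ as $n$ grows, $\hat\sigma_i(n)$ is rational, non-decreasing in $n$, and converges to $\sigma_i$ (possibly $+\infty$).

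The algorithm outputs $i_n=\min\{i\le n:\hat\sigma_i(n)<i\}$; I will call $i$ a \emph{candidate} at stage $n$ when $\hat\sigma_i(n)<i$, and a \emph{permanent} candidate when it is a candidate at every stage $n\ge i$. First I would record existence: by hypothesis some $\mu_h\in{\cal B}$ is typical, so $\sigma_h=c<\infty$; since $\mu_h$ recurs infinitely often in ${\cal B}$ it appears at some index $h'>c$, where $\sigma_{h'}=c<h'$. As $\hat\sigma_{h'}(n)\le\sigma_{h'}<h'$ for all $n\ge h'$, the index $h'$ is a permanent candidate, so permanent candidates exist; let $k$ be the least one.

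Next I would establish convergence and correctness together. Since $\hat\sigma_i(n)$ is non-decreasing, once $\hat\sigma_i(n)\ge i$ this persists, so every non-permanent index stops being a candidate at some finite stage and never returns. By minimality of $k$ each index $i<k$ is non-permanent, and there are only finitely many such $i$; let $N$ be a stage past which all of them have failed and past which $k$ is available. For $n>N$ the least candidate is exactly $k$, giving $i_n=k$ for all large $n$. For correctness, a permanent candidate $k$ obeys $\hat\sigma_k(n)<k$ for every $n$, whence $\sigma_k=\lim_n\hat\sigma_k(n)\le k<\infty$, and by \eqref{eq.A3} the sequence is typical for $\mu_k$. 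The point that makes this work is that the boundary case $\sigma_k=k$ need never be decided: a permanent candidate is automatically a measure for which the sequence is typical, so the incomputable value $\sigma_k$ is never compared exactly with the integer $k$.

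The step I expect to be the main obstacle is reconciling the argument with the co-c.e. presentation of ${\cal B}$. In the c.e. case the enumeration is available directly and the analysis above is complete. In the co-c.e. case ${\cal B}$ is itself produced only by identification in the limit, as in the example of Appendix~\ref{sect.comput}: the identity of each listed algorithm, and therefore each $\hat\sigma_i$, may be revised finitely often before it stabilises, which temporarily breaks monotonicity. I would handle this by observing that after the finite (but unknown) stage at which all entries $\mu_i$ with $i\le k$ have reached their final values, the monotonicity of the relevant $\hat\sigma_i$ and the entire candidate analysis resume unchanged; since only the finitely many entries up to $k$ influence the output, these finitely many revisions cannot prevent $i_n$ from eventually locking onto a permanent candidate $k$ with $\sigma_k<\infty$.
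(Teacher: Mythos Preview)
Your proposal is correct and follows essentially the same route as the paper. The paper's $\sigma^n(j)$, the ``$n$th value in the lower semicomputation of $\sigma(j)$,'' is precisely your $\hat\sigma_i(n)$ made explicit via the time-bounded complexity $K^{t_n}$ and rational under-approximations of $\log 1/\mu_i$; the algorithm in both cases outputs the least $i\le n$ whose current under-approximation is still below $i$, exploiting the same ``h-index'' trick that the infinitely repeated enumeration guarantees some index $h'>\sigma_{h'}$. Your treatment of the boundary case $\sigma_k=k$ is in fact more careful than the paper's (which tacitly conflates ``least $k$ with $\sigma_k<k$'' and ``least permanent candidate''), and your discussion of the co-c.e.\ presentation is at the same level of detail as the paper's---neither spells it out fully, but the observation that only finitely many entries below $k$ need to stabilise is adequate.
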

\begin{proof}
Define for $\mu$ in ${\cal B}$ 
\[
\sigma(j)= \log 1/ \mu (x_1 \ldots x_{j} ) - K(x_1 \ldots x_{j}).
\]
Since  $K$ is upper semicomputable and $\mu$ is computable,
the function $\sigma(j)$
is lower semicomputable for each $j$.
Define the $n$th value in the lower semicomputation of 
$\sigma(j)$ as $\sigma^n(j)$. 
By \eqref{eq.A3}, the data sequence $x_1, x_2, \ldots$ 
is typical for $\mu$ if
$\sup_{j\geq 1} \sigma(j) = \sigma < \infty$ 
In this case, since $\mu$ is lower semicomputable,
$\max_{1 \leq j \leq n} \sigma^n(j) \leq \sigma$ for all $n$.
In contrast, the data sequence is not typical for
$\mu$ if $\sigma(n) \rightarrow \infty$ with $n \rightarrow \infty$
implying $\sigma^n(n) \rightarrow \infty$ with $n \rightarrow \infty$.

By assumption there exists a measure in ${\cal B}$
for which the data sequence is typical. Let $\mu_h$ be such a measure
Since halting algorithms for $\mu_h$ occur infinitely often in the 
enumeration ${\cal B}$
there is a halting algorithm $\mu_{h'}$ in the enumeration ${\cal B}$
with $\sigma_{h'}=\sigma_h$ and
$\sigma_h < h'$. Therefore, there exists a measure
$\mu_k$ in ${\cal B}$ for which the data sequence $x_1, x_2, \ldots$
is typical and $\sigma_k < k$ with $k$ least.
The algorithm to determine $k$ 
is as follows. 

\begin{tabbing}
{\bf for} \= $n:=1,2, \ldots$\\ 

\> {\bf if} $i \leq n$ is least such that  
$\max_{1 \leq j \leq n}\sigma_i^n(j) <i$\\

\>  {\bf then} 
output $i_n = i$ {\bf else} output $i_n=1$.
\end{tabbing}

Eventually $\max_{1 \leq j \leq n}\sigma_k^n(j)<k$ for large enough $n$,
and $k$ is the least index of elements in ${\cal B}$ for which this holds.
Hence there exists an $n_0$  such that $i_n=k$ for all $n \geq n_0$.
\end{proof}

For large enough $n$ we have by Claim~\ref{claim.typical} a test
such that we can identify in the limit an index of a measure 
in ${\cal B}$ for which the provided data sequence is typical. 
Hence there is an $n_0$ such that $i_n=k$ for all $n \geq n_0$.
We do not care what $i_1, \ldots , i_{n-1}$ are.
This proves the theorem.
\end{proof}

\subsection{Genesis of the Result}\label{sect.genesis}
At the request of a referee we give a brief account of the
genesis of the result. In version arXiv:1208.5003 we assumed that we were 
dealing with all computable probabilities and the necessary extensions to
measures. The first part of the technical results dealt with
i.i.d drawing and ergodic Markov chains. Here a main ingredient was
to appeal to the known result that computable 
semiprobability mass functions (those
summing to 1 or less than 1) are computably enumerable in a linear list.
By some tricks we sought to computably extract the probabilities proper
from among them and use the Law of Large Numbers.
For the more difficult dependent case we resorted to measures. Here we used
a known result that the computable semimeasures (where the equality signs in 
the measure
conditions are replaced by inequality $\leq$ signs) are computably enumerable
as well in a linear list. Again we sought to computably extract the measures
proper from this list and use a (known) criterion that says that
the measures for which the provided infinite sequence of examples
is random (typical) keeps a certain quantity finite. The proof
in arXiv:1208.5003 entailed to separate the finite sequences of this quantity
from the infinite ones. This took a long time and effort. Subsequently
in \cite{BMS14} it was shown that the approach of arXiv:1208.5003
was in error: they showed by a very technical argument that
identification of computable probabilities
and computable measures by infinite sequences of examples was impossible.
Extensive email contact with one of the authors, Laurent Bienvenu, showed
that the essential point was the extraction of probabilities and
measures from the above computable enumerations of all computable
semiprobabilities and computable semimeasures. It turned out that
we required computable enumerations or co-computable enumerations of
computable probabilities and computable measures at the outset.
This was done in arXiv:1311.7385. That is, the identification does not
hold for all computable probabilities and computable measures as in the 
too large claims of arXiv:1208.5003 but for the subclass of
computable enumerations or co-computable enumerations of them.
Furthermore the very difficult argument separating bounded infinite
sequences from unbounded ones (in the dependent case) was replaced
by a simple one reminiscent of the h-index in citation science.
Namely, a  bounded infinite sequence has a(n unknown) bound. But if
the measures involved are enumerated then eventually the index of
one (there are infinitely many of them) for which the bound is 
relevant will pass this bound.

\section*{Acknowledgement}
We thank Laurent Bienvenu for pointing out an error in the
an earlier version and elucidating comments. Drafts of this paper
proceeded since 2012 in various states of correctness
through arXiv:1208.5003 to arXiv:1311.7385.

\end{document}